\newtheorem{corollary}{Corollary}
\newtheorem{theorem}{Theorem}
\newtheorem{lemma}{Lemma}
\newtheorem{definition}{Definition}
\title{Practical Bayesian Algorithm Execution via\\ Posterior Sampling}
\author{%
  Chu Xin Cheng\thanks{Equal contribution.} \\
  California Institute of Technology\\
  \texttt{ccheng2@caltech.edu} \\
  \And
    Raul Astudillo$^*$ \\
  California Institute of Technology\\
  \texttt{rastudil@caltech.edu} \\
  \And
    Thomas Desautels \\
  Lawrence Livermore National Laboratory\\
  \texttt{desautels2@llnl.gov} \\
  \And
  Yisong Yue \\
  California Institute of Technology\\
  \texttt{yyue@caltech.edu}
}
\begin{document}

\maketitle

\begin{abstract}
  
  We consider Bayesian algorithm execution (BAX), a framework for efficiently selecting evaluation points of an expensive function to infer a property of interest encoded as the output of a base algorithm. Since the base algorithm typically requires more evaluations than are feasible, it cannot be directly applied. Instead, BAX methods sequentially select evaluation points using a probabilistic numerical approach. Current BAX methods use expected information gain to guide this selection. However, this approach is computationally intensive.  Observing that, in many tasks, the property of interest corresponds to a target set of points defined by the function, we introduce \textit{PS-BAX}, a simple, effective, and scalable BAX method based on posterior sampling. PS-BAX is applicable to a wide range of problems, including many optimization variants and level set estimation. Experiments across diverse tasks demonstrate that PS-BAX performs competitively with existing baselines while being significantly faster, simpler to implement, and easily parallelizable, setting a strong baseline for future research.  Additionally, we establish conditions under which PS-BAX is asymptotically convergent, offering new insights into posterior sampling as an algorithm design paradigm.
  
\end{abstract}

\section{Introduction}
\label{sec:intro}
Many real-world problems can be cast as estimating a property of a black-box function with expensive evaluations. Bayesian optimization (BO)  \cite{frazier2018tutorial} has focused on the case where the property of interest is the function's global optimum. Similarly, level set estimation \cite{gotovos2013active} deals with the problem of estimating the subset of points above (or below) a user-specified threshold.

In many cases, an algorithm to compute the property of interest is available, which we refer to as the \textit{base algorithm}. However, this algorithm typically requires more evaluations than are feasible in practice and cannot be used directly. Instead, evaluation points must be carefully selected through other means. Similar to BO and level set estimation, the Bayesian algorithm execution (BAX) framework selects evaluation points using two key components: (1) a Bayesian probabilistic model of the function and (2) a sequential sampling criterion that leverages this model to choose new points for evaluation \cite{neiswanger2021bayesian}.

Existing approaches to BAX rely on expected information gain (EIG) as the criterion for selecting which points to evaluate \cite{neiswanger2021bayesian}. However, maximizing the EIG presents a significant computational challenge, particularly in high-dimensional problems or when the property of interest is complex. As a result, heuristic approximations are frequently employed, which can lead to suboptimal performance and limit the applicability of BAX in real-world scenarios.

To address this challenge, we propose \textit{PS-BAX}, a simple yet effective and scalable approach based on posterior sampling. Our approach is built upon the key observation that many real-world BAX tasks aim to find a \textit{target set}. For instance, in BO, the goal is to locate the function's global optimum, while in level set estimation, the objective is to find the points whose function value is above a specified threshold. PS-BAX only requires a single base algorithm execution at each iteration, making it much faster than EIG-based approaches, which require executing the base algorithm multiple times and optimizing over the input space. Despite its simple computation, we show that PS-BAX is competitive with existing baselines while being significantly faster.  Additionally,  we show that it enjoys appealing theoretical guarantees. Specifically, we prove that PS-BAX is asymptotically convergent mild regularity conditions.

Our contributions are summarized as follows: 
\begin{itemize} \item We derive PS-BAX, a posterior sampling algorithm applicable to a broad class of BAX problems, unlocking new applications and offering a fresh perspective on the scope of posterior sampling algorithms.

\item We show that PS-BAX is orders of magnitude faster than the EIG-based approach INFO-BAX \cite{neiswanger2021bayesian} while remaining competitive with this and other specialized algorithms.

\item We prove that PS-BAX is asymptotically convergent under mild regularity conditions.

\end{itemize}


\section{Related Work}
\label{sec:related}

Our work falls within the broader field of probabilistic numerics \citep{hennig2022probabilistic}, which frames numerical problems, such as optimization or integration, as probabilistic inference tasks. This probabilistic perspective enables uncertainty quantification, which is particularly important in settings with limited computational budgets, where computations must be carefully planned, often adaptively. While much of the recent work in probabilistic numerics has focused on (Bayesian) optimization \citep{frazier2018tutorial,garnett2023bayesian}, there have also been efforts in other areas, including integration (Bayesian quadrature) \citep{o1991bayes, xi2018bayesian, adachi2023sober}, level set estimation \citep{gotovos2013active,lyu2021evaluating}, and solving differential equations \citep{hennig2014probabilistic,kramer2022probabilistic}.

Recently, \cite{neiswanger2021bayesian} proposed INFO-BAX, an approach to estimate an arbitrary property of interest that could be computed by a known base algorithm. Since the base algorithm requires a potentially large number of function evaluations, it cannot be applied directly. Instead, following the probabilistic numerics paradigm, a Bayesian probabilistic model of the function is used to iteratively select new points to evaluate. At each iteration, the next evaluation point is chosen by maximizing the expected information gain (EIG) between the function's value at the point and the property of interest. Similar EIG-based approaches have been employed in the statistical design of experiments \citep{lindley1956measure,mitchell2000algorithm,santner2003design} and BO \citep{hennig2012entropy,hernandez2014predictive,wang2017max}, often yielding excellent performance. However, these methods are computationally demanding due to the look-ahead nature of the EIG computation. Moreover, in most cases, the EIG cannot be computed in closed form and must be approximated via Monte Carlo sampling. As a result, EIG-based approaches are mainly useful in low-dimensional settings or when function evaluations are highly expensive, limiting their applicability in real-world problems.

In response to the limitations of EIG-based approaches, we explore an alternative family of strategies known as posterior sampling or Thompson sampling \citep{russo2014learning,russo2018tutorial}. Posterior sampling algorithms have been widely used in BO \citep{kandasamy2018parallelised,dai2020federated,rashidi2024cylindrical}, multi-armed bandits \citep{agrawal2013thompson,dong2019performance,liu2023nonstationary}, and reinforcement learning \citep{osband2013more,osband2017posterior,sasso2023posterior}. In such settings, these approaches select a point at each iteration according to the posterior probability of being the optimum. To our knowledge, our work represents the first extension of posterior sampling beyond optimization settings, offering fresh insights into this algorithmic family. While the range of problems our approach can address is narrower than those that EIG-based methods can conceptually tackle, it still encompasses a substantial class. Notably, this includes the problems explored empirically by \cite{neiswanger2021bayesian} and follow-up work \cite{lyle2023discobax}, among others.

Our work aligns with recent efforts to broaden the applicability of BO to complex real-world problems. Many such problems deviate from classical optimization formulations, exhibiting diverse structures such as combinatorial \citep{lyle2023discobax}, robust \citep{bogunovic2018adversarially,cakmak2020bayesian}, or multi-level optimization \citep{dogan2023bilevel}. Traditional BO algorithms often fail to naturally accommodate these structures, limiting their practical utility.  We introduce a straightforward algorithm applicable to these diverse settings, providing a robust baseline for future exploration. Furthermore, our approach benefits from recent advances in probabilistic modeling tools \citep{balandat2020botorch,pleiss2020fast,wilson2021pathwise}, paving the way for applying these tools to a broader range of problems.


\section{Bayesian Algorithm Execution via Posterior Sampling}
\label{sec:algo}

\begin{algorithm}[t]
\caption{PS-BAX \label{alg:ps}}
\begin{algorithmic}[1]
\REQUIRE $p(f)$ (prior), $\cD_0$ (initial dataset), $\cA$ (base algorithm), $N$ (number of iterations). 
\FOR{$n = 1:N$}
    \STATE Sample $\tilde f_n$ from $p(f\mid \cD_{n-1})$ 
    \STATE Apply algorithm $\mathcal A$ on $\tilde f_n$ to obtain $X_n = \cO_{\mathcal A} (\tilde f_n)$ 
    \STATE Choose $x_n \in \argmax_{x\in X_n}\mathbf{H}[f(x) | \cD_{n-1}]$ \hfill \textit{\textcolor{blue}{//Choose $x_n\in X_n$ with the highest uncertainty}}
    \STATE Evaluate $y_n = f(x_n) + \epsilon_n$ and set $\cD_n = \cD_{n-1} \cup \{(x_n, y_n)\}$
\ENDFOR
\RETURN Estimate of $\cO_\cA(f)$ based on $p_N$.\hfill \textit{\textcolor{blue}{//E.g., run $\cA$ on the final posterior mean}}
\end{algorithmic}
\end{algorithm}


\paragraph{Problem Setting} Our work takes place within the Bayesian algorithm execution (BAX) framework introduced by \cite{neiswanger2021bayesian}. The goal is to estimate $\cO_{\cA}(f)$, the output of a \textit{base algorithm} $\cA$ applied to a function  $f: \mathbb{X} \rightarrow \mathbb{R}$. We assume that $f$ is expensive to evaluate, which means that employing $\cA$ directly on $f$ is infeasible (as it would require evaluating $f$ too many times). Instead, we select the points at which $f$ is evaluated sequentially, aided by a probabilistic model described below. We specifically focus on problems where the property of interest can be encoded by a set $\cO_{\cA}(f) \subset \mathbb{X}$, which we term the \textit{target set}. 
As we shall see later, our framework encompasses a wide range of problems, including BO\footnote{To reduce to standard BO, one can define the target set $\cO_\cA(f)$ as the points $x\in\mathbb{X}$ that maximize $f$, i.e., $\cO_\cA(f) = \argmax_{x\in\mathbb{X}} f(x)$ (often a singleton set).}, level-set estimation,  shortest-path finding on graphs, and top-$k$ estimation, with applications to topographic estimation and drug discovery.


\paragraph{Probabilistic Model} Similar to many probabilistic numerical methods, our algorithm relies on a probabilistic model encoded by a prior distribution over $f$, which we denote by $p$. Although our framework is more general and can be used with other priors, we assume for concreteness that $f$ follows a Gaussian process (GP) prior \cite{williams2006gaussian}. Let $\cD_{n-1} = \{(x_k, y_k)\}_{k=1}^{n-1}$ denote the data collected after $n-1$ evaluations of $f$. We assume these evaluations are corrupted with i.i.d. Gaussian noise, i.e., $y_k = f(x_k) + \epsilon_k$, where $\epsilon_1,\ldots, \epsilon_{n-1}$ are i.i.d. with common distribution $\cN(0, \sigma^2)$, where $\sigma^2$ is a non-negative scalar. Under these assumptions, the posterior distribution over $f$ given $\cD_{n-1}$, denoted by $p_n$, is again a GP whose mean and covariance functions can be computed in closed form using the classical GP regression equations.

\paragraph{INFO-BAX and its Shortcomings}
Before introducing our algorithm, we briefly comment on prior work based on the expected information gain (EIG)  \citep{neiswanger2021bayesian}. Succinctly, the INFO-BAX approach proposed by \cite{neiswanger2021bayesian} selects at each iteration the point that maximizes the expected entropy reduction between the function's value at the evaluated point and $\cO_\cA(f)$. Evaluating an expectation is generally difficult, and one often resorts to Monte Carlo sampling.  Moreover, computing the EIG specifically requires expensive calculations of conditional posterior distributions and entropy. These computational issues are also present in similar information-theoretic acquisition functions proposed in the classic BO setting. However, for BAX tasks, the computation burden of EIG can be much more pronounced if $|\cO_\cA(f)|$ is large. This occurs, for example, in the level set estimation setting, where $\cO_\cA(f)$ can be comprised of a large number of points. We defer a more detailed discussion of the computation of the EIG  to Appendices~\ref{app:eig} and ~\ref{app:complexity} .

\begin{figure}[t]
    \centering
    \includegraphics[width = \textwidth]{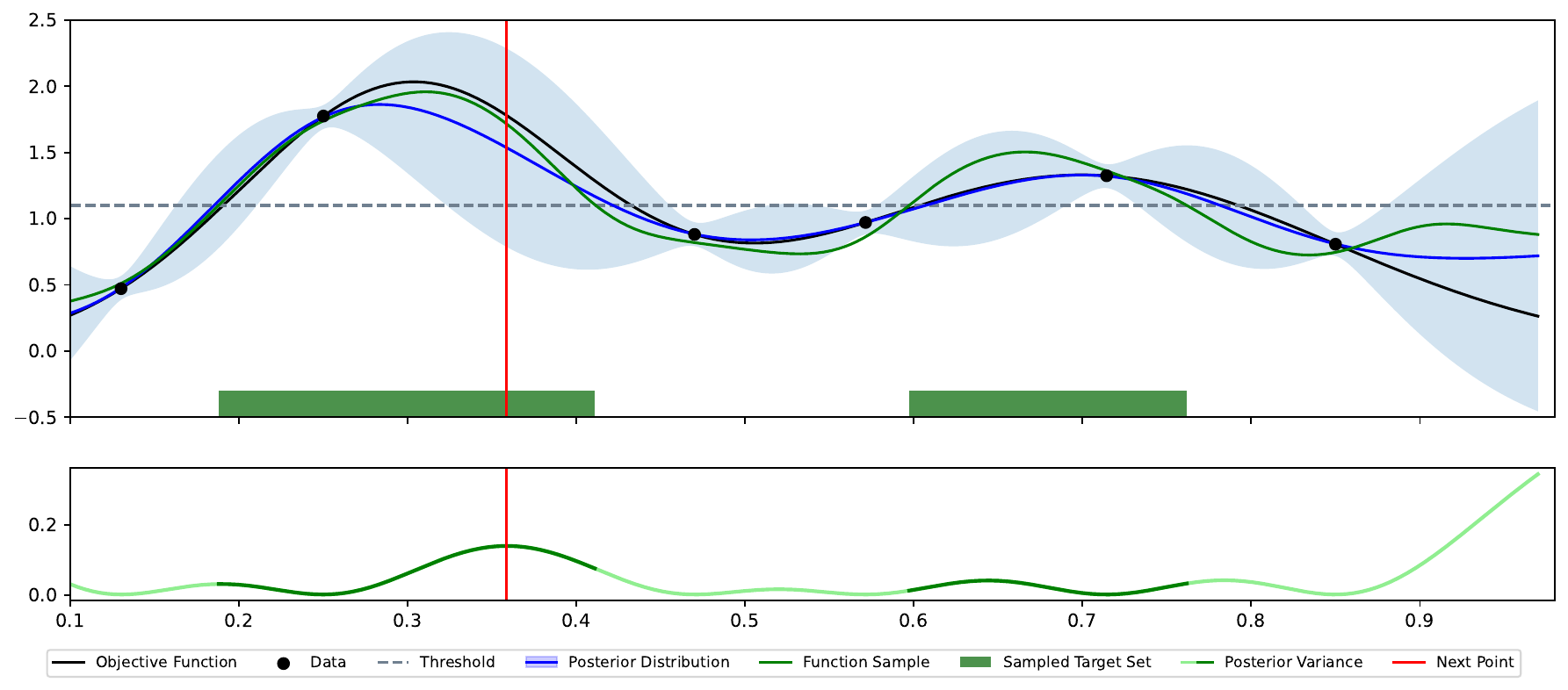}
    \caption{Depiction of PS-BAX (Algorithm~\ref{alg:ps}) for the level-set estimation problem. We plot the objective function $f$ (black line), the current available data $\cD_{n-1}$ (black points), the threshold (grey dashed line), the posterior distribution $p(f \mid \cD_{n-1})$ (blue line and light blue region), a sample from the posterior $\tilde f_n \sim p(f \mid \cD_{n-1})$ (green line), the corresponding sampled target set $X_n=\cO_{\cA}(\tilde f_n)$ (green region) (this is the set of inputs where the green line is above the threshold), the variance of $p(f \mid \cD_{n-1})$  (green line, bottom row), and the next point to evaluate selected by PS-BAX $x_{n} \in X_n$ (input marked by the vertical red line). The key step is computing the target set $X_n$ using the sampled function $\tilde f_n$, which generalizes posterior sampling for standard BO.}
    \label{fig:demo1d}
\end{figure}

\paragraph{PS-BAX} To overcome the computational limitations of EIG-based approaches, we introduce a simple strategy based on posterior sampling, which we term PS-BAX. For ease of exposition, we only describe the fully sequential version of our algorithm and defer the batched (parallelized) version to Appendix~\ref{app:batch}. Our algorithm is summarized in Algorithm~\ref{alg:ps}.
PS-BAX is comprised of two steps, detailed below.  
\begin{enumerate}
\item We sample a target set $X_n\subset \mathbb{X}$ according to the posterior probability that $X_n = \cO_\cA(f)$ (lines 2-3 in Algorithm~\ref{alg:ps}). This can be achieved by drawing a sample from the posterior over $f$, denoted by $\tilde{f}_n$ (line 2), and then setting $X_n = \cO_\cA(\tilde{f}_n)$ (line 3).
\item We select the point in the sampled target set $X_n$ with maximal uncertainty or entropy: $x_n \in \argmax_{x\in X_n}\mathbf{H}[f(x) | \cD_n]$ (line 4 in Algorithm~\ref{alg:ps}).
For a Gaussian posterior, $x_n$ can be equivalently selected using the maximal posterior standard deviation: $x_n \in \argmax_{x\in X_n}\sigma_n(x)$, where $\sigma_n(x)$ is the posterior standard deviation of $f(x)$. 
\end{enumerate}
Note that the second step is unnecessary in standard BO, since $X_n$ is typically a singleton.

\paragraph{Depiction for Level Set Estimation}
 Figure~\ref{fig:demo1d} depicts an iteration of PS-BAX for the level-set estimation problem, where $\cO_{\cA}(f) := \{x \in \mathbb{X} \mid f(x) > \tau\}$, for a user-specified value of $\tau$.  Line 3 of Algorithm~\ref{alg:ps} returns a target set $X_n$ based on where the sampled function $\tilde f_n$ (purple line in Figure \ref{fig:demo1d}) is above the threshold $\tau$ (grey region in Figure \ref{fig:demo1d}). Line 4 of Algorithm~\ref{alg:ps} then chooses the point $x_n \in X_n$ that has maximal uncertainty (red location in Figure \ref{fig:demo1d}).  In the standard BO setting where $\cA$ is computing the maximizer of $f$, then the target region $X_n$ is simply a singleton point where the sampled $\tilde f$ has 
 highest value (and Line 4 in Algorithm~\ref{alg:ps} is not necessary).

\paragraph{Discussion}
We now provide an intuitive explanation of why one might expect  PS-BAX to perform well. 
In the standard BO setting, posterior sampling is known to deliver excellent performance
\cite{kandasamy2018parallelised,eriksson2019scalable} and enjoys strong theoretical guarantees \cite{russo2014learning,kandasamy2018parallelised}. Like in the BO setting, the intrinsic goal of posterior sampling in our setting is to balance exploration and exploitation. In our case, this means selecting points for which, according to our probabilistic model, membership in the target set is still highly uncertain among the likely candidates. To achieve this, the first step of PS-BAX selects a random set $X_n$, according to the probability of this set being the target set in the same vein as traditional posterior sampling in the BO setting (the set of likely candidates). Unlike in BO, however, the target set is, in principle, comprised of several points, and thus, we must come up with a criterion to choose one. To overcome this, the second step simply selects the point with the highest \textit{uncertainty} among points in $X_n$, which is a standard strategy in the active learning literature \cite{riis2022bayesian}.

\paragraph{Computational Efficiency}
PS-BAX requires running $\cA$ only once on a single sample of $f$, which is one of the key advantages of our algorithm in terms of practicality and scalability. Similar to posterior sampling in the standard BO setting, our approach avoids the need to maximize an acquisition function over $\mathbb{X}$, a process that is computationally expensive because it involves calculating the expected value of quantities like information gain. As demonstrated in our numerical experiments, this makes our algorithm significantly faster than the INFO-BAX algorithm proposed by \cite{neiswanger2021bayesian}, particularly in problems where either $\cO_\cA(f)$ or $\mathbb{X}$ are large.

\paragraph{Convergence of PS-BAX} A natural question is under which conditions is PS-BAX able to \textit{find} the target set given enough evaluations. We address this question below. Before stating our results, we introduce a definition related to the characterization of problems where PS-BAX converges.
\begin{definition}
\label{def:local}
A target set estimated by an algorithm $\cA$ is said to be \textit{complement-independent} if, for any pair of functions $f, f':\mathbb{X} \rightarrow \mathbb{R}$, it holds that $\cO_\cA(f) = \cO_\cA(f')$ whenever there exists a set $S$ such that $\cO_\cA(f) \cup \cO_\cA(f') \subset S$ and $f(x) = f'(x)$ for all $x \in S$.
\end{definition}
Many target sets of interest, such as a function's optimum or level set, are complement-independent. Indeed, the value of $f$ at points that are not the optimum or that do not lie in the level of interest do not influence these properties. Theorem~\ref{thm:consist} below shows that PS-BAX enjoys Bayesian posterior concentration, provided the target set of interest is complement-independent. Intuitively, this result means that if $f$ is drawn from the prior used by our algorithm (i.e., the prior is well-specified), then, with probability one, the posterior will concentrate around the true target set. Corollary~\ref{cor:consist} gives an asymptotically consistent estimator of the target set. Finally, we also show there are problems where the target set is not complement-independent and PS-BAX is not asymptotically consistent in Theorem~\ref{thm:notconsist}. The proofs of these results can be found in Appendix~\ref{app:proofs}.
\begin{theorem}
\label{thm:consist}
Suppose that $\mathbb{X}$ is finite and that the target set estimated by $\cA$ is complement-independent. If the sequence of points $\{x_n\}_{n=1}^\infty$ is chosen according to the PS-BAX algorithm, then, for each $X \subset \mathbb{X}$, $\lim_{n\rightarrow\infty}\mathbf{P}_n( \cO_\cA(f)=X) = \mathbf{1}\{\cO_\cA(f)=X\}$ almost surely for $f$ drawn from the prior.
\end{theorem}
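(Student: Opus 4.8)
The plan is to recast the statement as a posterior-consistency result and reduce it to a measurability claim. Write $M_n^X := \mathbf{P}_n(\cO_\cA(f) = X) = \mathbf{E}[\mathbf{1}\{\cO_\cA(f)=X\} \mid \cD_n]$. For each fixed $X \subset \mathbb{X}$ this is a uniformly bounded martingale valued in $[0,1]$ with respect to the data filtration, so by L\'evy's upward convergence theorem $M_n^X \to M_\infty^X := \mathbf{P}(\cO_\cA(f) = X \mid \cD_\infty)$ almost surely, where $\cD_\infty := \sigma(\bigcup_n \cD_n)$. Hence it suffices to prove that $\cO_\cA(f)$ is $\cD_\infty$-measurable (up to a null set): then $M_\infty^X = \mathbf{1}\{\cO_\cA(f) = X\}$ and the theorem follows. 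The entire difficulty is thus to show that the collected data asymptotically determine the true target set.

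Two ingredients drive the argument. On the modeling side, I would record the standard GP facts that, with $\mathbb{X}$ finite, the posterior variance $\sigma_n(x)$ is non-increasing in $n$ and therefore converges to some $\sigma_\infty(x) \ge 0$; that $\sigma_n(x)$, and hence $\mathbb{X}_\infty := \{x : \sigma_\infty(x) = 0\}$, depends only on the evaluated locations and so is $\cD_\infty$-measurable; and that for every $x \in \mathbb{X}_\infty$ the posterior mean converges to $f(x)$, so $f$ restricted to $\mathbb{X}_\infty$ is $\cD_\infty$-measurable. On the algorithmic side, the crucial property of posterior sampling is that the sampled set obeys $\mathbf{P}(X_n = X \mid \cD_{n-1}) = M_{n-1}^X$. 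Combining this with the conditional (L\'evy) Borel--Cantelli lemma gives that whenever $M_\infty^X > 0$ the set $X$ is sampled infinitely often. The key lemma I then need is that any set sampled infinitely often is fully resolved, i.e. $X$ sampled i.o. implies $X \subseteq \mathbb{X}_\infty$. This is exactly where line 4 of Algorithm~\ref{alg:ps} enters: at the sampling times of $X$ the algorithm evaluates $\argmax_{x \in X}\sigma_{n-1}(x)$, and if some $x' \in X$ had $\sigma_\infty(x') > 0$, then, because the variances of the already-resolved points of $X$ tend to $0$, for all large sampling times the selected point would be one of the finitely many unresolved points of $X$; by pigeonhole some unresolved point would be evaluated infinitely often and its variance would vanish, a contradiction (taking $\sigma^2 > 0$; the noiseless case is analogous and simpler). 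I expect this lemma, coupling the max-uncertainty rule to finiteness, to be the main obstacle.

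Next I would show the true target set is never asymptotically ruled out, giving $\cO_\cA(f) \subseteq \mathbb{X}_\infty$ almost surely. Fix $x^* \in \mathbb{X}$ and set $Q_\infty(x^*) := \lim_n \mathbf{P}_n(x^* \in \cO_\cA(f)) = \sum_{X \ni x^*} M_\infty^X$, which is $\cD_\infty$-measurable. Conditioning on $\cD_\infty$,
\[
\mathbf{P}\big(x^* \in \cO_\cA(f),\; Q_\infty(x^*) = 0\big) = \mathbf{E}\big[\mathbf{1}\{Q_\infty(x^*)=0\}\, Q_\infty(x^*)\big] = 0 ,
\]
so almost surely $x^* \in \cO_\cA(f)$ forces $Q_\infty(x^*) > 0$, whence some $X \ni x^*$ has $M_\infty^X > 0$; by the previous paragraph that $X$ is sampled infinitely often and satisfies $X \subseteq \mathbb{X}_\infty$, so $x^* \in \mathbb{X}_\infty$. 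Taking the (finite) union over $x^* \in \mathbb{X}$ yields $\cO_\cA(f) \subseteq \mathbb{X}_\infty$ almost surely.

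Finally I would invoke complement-independence to close the loop. Since $\cO_\cA(f) \subseteq \mathbb{X}_\infty$, applying Definition~\ref{def:local} with $S = \mathbb{X}_\infty$ shows that any function agreeing with $f$ on $\mathbb{X}_\infty$ and whose target set lies in $\mathbb{X}_\infty$ has the same target set as $f$; since $f$ is itself such a function, $\cO_\cA(f)$ is a deterministic function of the pair $(\mathbb{X}_\infty, f|_{\mathbb{X}_\infty})$, both of which are $\cD_\infty$-measurable. Hence $\cO_\cA(f)$ is $\cD_\infty$-measurable (the only mild point to verify is measurability of this deterministic map, which follows from measurability of $\cO_\cA$), and therefore $M_\infty^X = \mathbf{1}\{\cO_\cA(f) = X\}$, completing the proof.
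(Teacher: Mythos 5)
Your proposal is correct and follows essentially the same route as the paper's proof: Lévy/martingale convergence reduces the claim to $\cF_\infty$-measurability of $\cO_\cA(f)$, a conditional Borel--Cantelli argument plus the max-uncertainty rule shows every set with positive limiting posterior probability is sampled infinitely often and hence fully resolved, the true target set is almost surely contained in the resolved region, and complement-independence closes the loop. The only (cosmetic) difference is that you define the resolved region via vanishing posterior variance ($\mathbb{X}_\infty$) while the paper defines its $S_\infty$ as the union of sets with positive limiting posterior probability and then proves the variances vanish there; your write-up is, if anything, slightly more explicit about the Borel--Cantelli and pigeonhole steps the paper leaves implicit.
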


\begin{corollary}
\label{cor:consist}
Suppose the assumptions of Theorem~\ref{thm:consist} hold, and let $T_n \in \argmax_{X \in \mathbb{X}} \mathbf{P}_n( \cO_\cA(f) = X)$. Then, for $f$ drawn from the prior, we have $T_n = \cO_\cA(f)$ for all sufficiently large $n$ almost surely.
\end{corollary}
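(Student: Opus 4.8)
The plan is to deduce the corollary directly from Theorem~\ref{thm:consist} by exploiting the finiteness of the power set of $\mathbb{X}$. Write $X^\star := \cO_\cA(f)$ for the (random) true target set. Theorem~\ref{thm:consist} provides, for each fixed subset $X \subset \mathbb{X}$, a probability-one event on which $\mathbf{P}_n(\cO_\cA(f) = X) \to \mathbf{1}\{\cO_\cA(f) = X\}$. Since $\mathbb{X}$ is finite, there are only $2^{|\mathbb{X}|}$ such subsets, so I would intersect these finitely many almost-sure events to obtain a single event $\Omega_0$ of probability one on which the convergence holds simultaneously for every $X \subset \mathbb{X}$.

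On $\Omega_0$, fix a realization $f$ together with its target set $X^\star$. The theorem then yields $\mathbf{P}_n(\cO_\cA(f) = X^\star) \to 1$ and $\mathbf{P}_n(\cO_\cA(f) = X) \to 0$ for every $X \neq X^\star$. I would next convert these limits into a strict separation: choose $N$ large enough that $\mathbf{P}_n(\cO_\cA(f) = X^\star) > 1/2$ for all $n \geq N$. Because the posterior probabilities over the finitely many candidate sets sum to one, this forces $\mathbf{P}_n(\cO_\cA(f) = X) < 1/2 < \mathbf{P}_n(\cO_\cA(f) = X^\star)$ for every $X \neq X^\star$ and every $n \geq N$. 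Hence $X^\star$ is the unique maximizer of the map $X \mapsto \mathbf{P}_n(\cO_\cA(f) = X)$ for all $n \geq N$, so any selection $T_n$ from the argmax must coincide with $X^\star$. As $\Omega_0$ has probability one, this establishes $T_n = \cO_\cA(f)$ for all sufficiently large $n$ almost surely.

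The step requiring the most care --- and the only place where finiteness is genuinely essential --- is the passage from the per-set convergence of Theorem~\ref{thm:consist} to the simultaneous, uniform-over-$X$ statement needed to control the argmax. With $\mathbb{X}$ finite this is immediate, since a finite intersection of probability-one events again has probability one; were the power set infinite, one would instead need a uniform convergence argument, as the intersection of uncountably (or even merely countably) many almost-sure events can fail to have full measure. A secondary, minor point is that $X^\star$ is itself random and the threshold $N$ depends on the realization $f$, but since the entire argument is conducted pointwise on $\Omega_0$ this causes no difficulty; similarly, the possible non-uniqueness of the argmax is harmless, because the strict separation renders the maximizer unique once $n \geq N$.
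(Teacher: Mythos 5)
Your proof is correct and is precisely the argument the corollary is meant to follow from: the paper itself states the corollary without proof (its appendix only proves Theorems~\ref{thm:consist} and~\ref{thm:notconsist}), treating it as an immediate consequence of the theorem. Your derivation---intersecting the finitely many almost-sure convergence events over subsets of $\mathbb{X}$ and then using that $\mathbf{P}_n(\cO_\cA(f)=X^\star)>1/2$ eventually forces $X^\star$ to be the unique maximizer---correctly supplies the omitted details, including the observation that finiteness of $\mathbb{X}$ is what makes the simultaneous convergence free.
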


\begin{theorem}
\label{thm:notconsist}
There exists a problem instance (i.e., $\mathbb{X}$, a Bayesian prior over $f$, and $\cA$) such that if the sequence of points $\{x_n\}_{n=1}^\infty$ is chosen according to the PS-BAX algorithm, then there is a set $X \subset \mathbb{X}$ such that  $\lim_{n\rightarrow\infty}\mathbf{P}_n(\cO_\cA(f)=X) = 1/2$ almost surely for $f$ drawn from the prior.
\end{theorem}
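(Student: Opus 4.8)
The plan is to prove this by explicit construction. Since Theorem~\ref{thm:consist} establishes consistency whenever the target set is complement-independent, the natural counterexample should violate complement-independence in the strongest possible way: make the identity of $\cO_\cA(f)$ depend on the value of $f$ at a point that \emph{never} belongs to any realizable target set. Concretely, I would take $\mathbb{X} = \{x_1, x_2, x_3\}$ with a prior under which $f(x_1), f(x_2), f(x_3)$ are independent $\cN(0,1)$ random variables (e.g., a GP with the identity covariance matrix on these three points, though only the independence is used), and define the base algorithm by $\cO_\cA(g) = \{x_1\}$ if $g(x_3) > 0$ and $\cO_\cA(g) = \{x_2\}$ otherwise. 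A quick check confirms that this target set is not complement-independent: picking $f, f'$ that agree on $S = \{x_1, x_2\}$ but have $f(x_3) > 0 \ge f'(x_3)$ yields $\cO_\cA(f) = \{x_1\} \neq \{x_2\} = \cO_\cA(f')$ while $\cO_\cA(f) \cup \cO_\cA(f') \subseteq S$, so the hypothesis of Theorem~\ref{thm:consist} genuinely fails here and we are outside its guarantee.

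The crux of the argument is the observation that PS-BAX can only ever query points lying in some realizable target set. By construction $\cO_\cA(g) \subseteq \{x_1, x_2\}$ for \emph{every} function $g$, so the sampled target set satisfies $X_n = \cO_\cA(\tilde f_n) \in \{\{x_1\}, \{x_2\}\}$, and the point $x_n$ selected in line~4 (which here trivially returns the unique element of the singleton $X_n$) always lies in $\{x_1, x_2\}$. Consequently $x_3$ is never evaluated, and every dataset $\cD_n$ produced by the algorithm consists solely of noisy observations of $f(x_1)$ and $f(x_2)$.

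I would then conclude by tracking the posterior marginal of $f(x_3)$. Since $f(x_3)$ is independent of $(f(x_1), f(x_2))$ under the prior and $\cD_n$ is a measurable function of observations at $x_1$ and $x_2$ only, the posterior marginal of $f(x_3)$ given $\cD_n$ equals its prior marginal $\cN(0,1)$ for every $n$; by symmetry, $\mathbf{P}_n(f(x_3) > 0) = 1/2$. Taking $X = \{x_1\}$ then gives $\mathbf{P}_n(\cO_\cA(f) = \{x_1\}) = \mathbf{P}_n(f(x_3) > 0) = 1/2$ for all $n$ and every realization, so the limit is $1/2$ almost surely. The main technical point to get right is not a computation but the measure-theoretic justification that adaptivity does not leak information about $f(x_3)$: although the design points $x_n$ are random and chosen adaptively, their randomness enters only through the fresh posterior samples $\tilde f_n$ and through the data at $x_1, x_2$, neither of which carries information about the realized value of $f(x_3)$. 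Making this conditional-independence claim precise (e.g., by noting that the filtration generated by the observations is independent of $f(x_3)$) is the one step that requires care.
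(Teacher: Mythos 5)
Your proposal is correct and follows essentially the same construction as the paper: a three-point domain where the target set is determined entirely by the value of $f$ at a point that belongs to no realizable target set, so PS-BAX never queries it and the posterior probability of each candidate target set stays at $1/2$. The only difference is cosmetic --- the paper makes $f$ deterministic (equal to $0$) at the two queryable points so the posterior literally never changes, which sidesteps the conditional-independence argument you correctly identify as the one step needing care in your version.
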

\section{Numerical Experiments}
\label{sec:experiments}





We evaluate the performance of PS-BAX on eight problems across four problem classes. For each problem class, we specify the base algorithm used. We compare the performance of PS-BAX against INFO-BAX \cite{neiswanger2021bayesian} and uniform random sampling over $\mathbb{X}$ (Random).  When available, we also include an algorithm from the literature specifically designed for the problem class. Additional implementation details of the algorithms are described in Appendix~\ref{app:implement}. In all experiments, an initial dataset is generated by sampling $2(d+1)$ inputs uniformly at random from $\mathbb{X}$, where $d$ denotes the dimensionality of the input space. Following this initialization, each algorithm sequentially selects additional batches of points. Unless stated otherwise, the batch size is set to $q = 1$ in most of our experiments. The performance of each algorithm is determined by running the algorithm $\cA$ on $\mu_n$, the posterior mean of $f$ given $\cD_{n}$ and subsequently computing a suitable performance metric on $\cO_{\cA}(\mu_n)$. Code to reproduce our experiments is available at \url{https://github.com/RaulAstudillo06/PSBAX}.

\paragraph{Summary of Findings} Overall, we find that PS-BAX is always competitive with and sometimes significantly outperforms INFO-BAX across all of our experiments.
Additionally, as shown in Table \ref{tab:runtimes}, PS-BAX can be orders of magnitude faster in wall-clock runtime.
 PS-BAX outperforms INFO-BAX on five out of eight problems, offering particularly large improvements in the Local Optimization and DiscoBAX problem classes. Moreover, in the Local Optimization and Level Set Estimation problems, PS-BAX also outperforms algorithms from the literature specifically designed for such problem classes. On the simpler problems, such as those in the Top-$k$ problem class, PS-BAX is competitive with INFO-BAX while still being significantly faster.  

 \begin{table}[t]
\begin{center}
\begin{tabular}{@{}lrr@{}}

\toprule
Problem                                  & PS-BAX Runtime (s) & INFO-BAX Runtime (s) \\ \midrule
Local Optimization: Hartmann (6D)       & 0.37          & 7.64         \\
Local Optimization: Ackley (10D)        & 3.36          & 29.31         \\
Level Set Estimation: Himmelblau      & 0.57         & 14.97        \\
Level Set Estimation: Volcano      & 0.49          & 289.91        \\
Top-$k$: Rosenbrock ($k=6$)                   & 0.92          & 18.31          \\
Top-$k$: GB1 ($k=10$)           & 145.23      & 865.85          \\
DiscoBAX: Tau Protein Assay          & 3.78         & 113.20         \\
DiscoBAX: Interferon-Gamma Assay  & 3.95         & 97.03         \\ \bottomrule
\end{tabular}
\end{center}
\caption{Average runtimes per iteration of PS-BAX and INFO-BAX across our test problems. In all of them, PS-BAX is between one and three orders of magnitude faster than INFO-BAX. We also note that the runtimes for both algorithms are significantly longer on the Top-10 GB1 problem due to the use of a deep kernel GP model. \label{tab:runtimes}}
\end{table}






\subsection{Local Optimization}
\begin{figure*}[t]
\centering
    \begin{subfigure}{0.48\textwidth}
        \includegraphics[width=\textwidth]{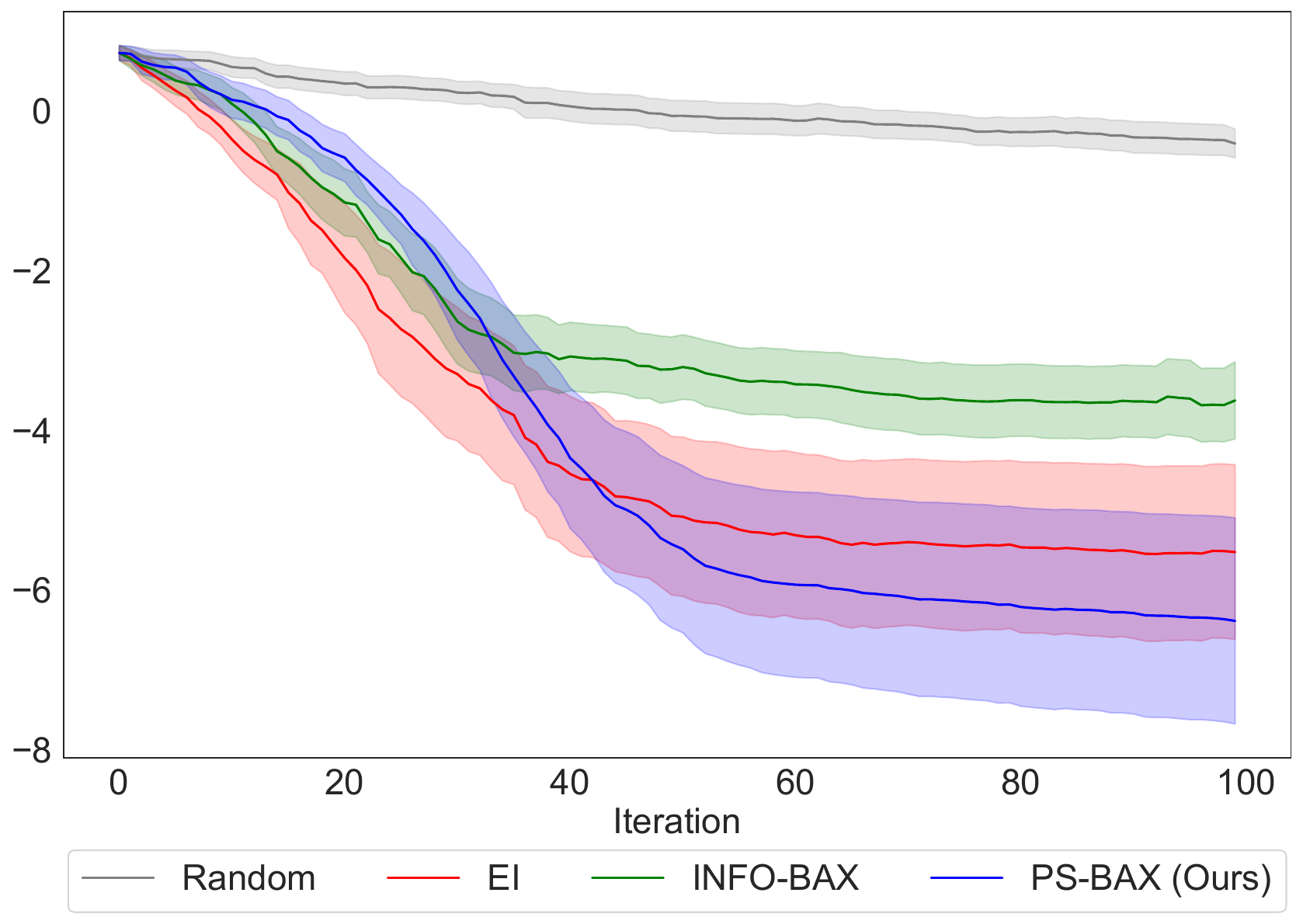}
    \end{subfigure}
    \hfill
    \begin{subfigure}{0.48\textwidth}
        \includegraphics[width=\textwidth]{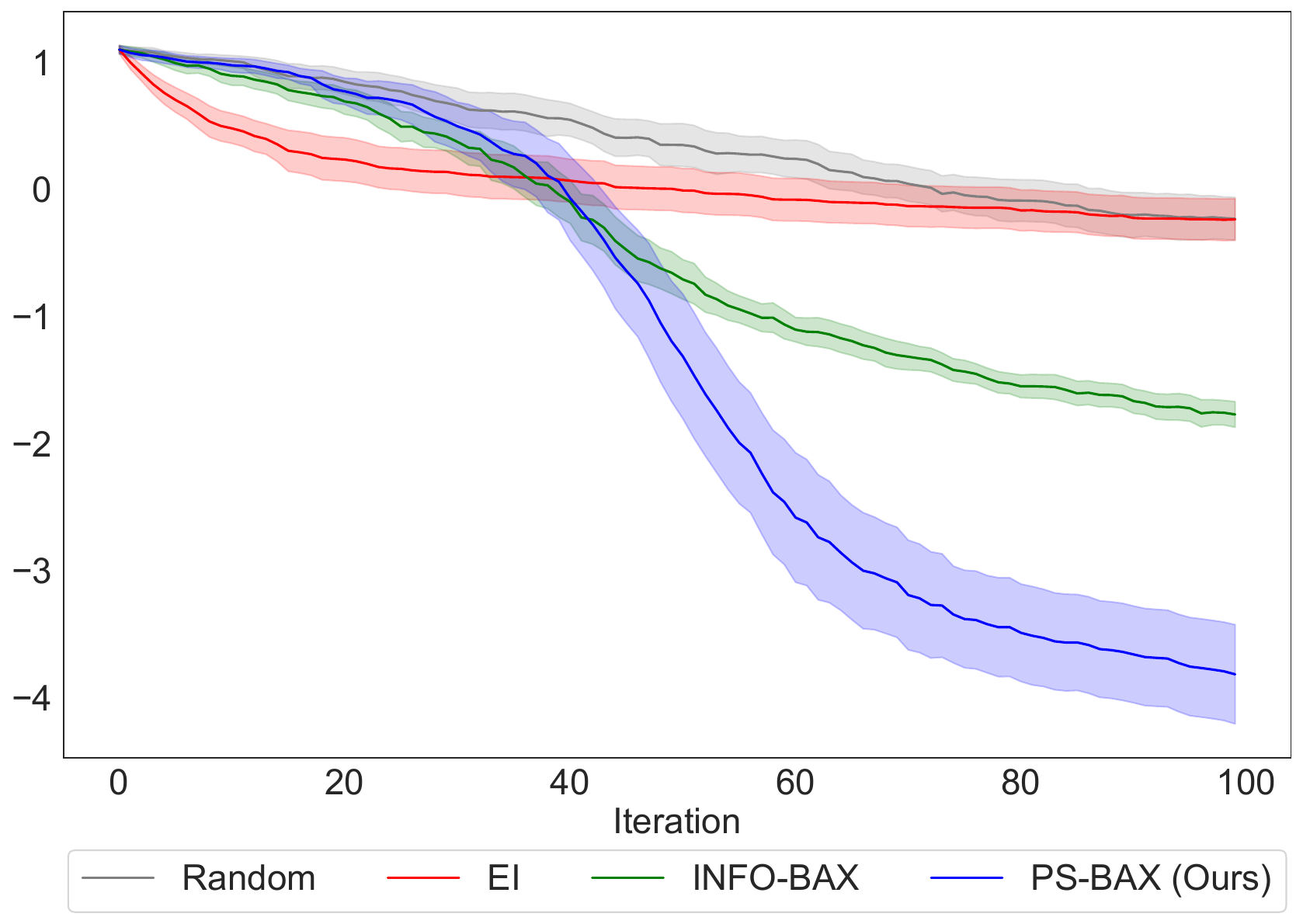}
    \end{subfigure}
    \caption{Results for Local Optimization, showing the log10 inference regret achieved by the compared algorithms (lower values indicate better performance). The left and right panels present results for the Hartmann-6D and Ackley-10D functions, respectively. On Hartmann-6D, PS-BAX and EI perform comparably, both outperforming INFO-BAX. On Ackley-10D, PS-BAX achieves significantly better results than the rest of the algorithms.
    \label{fig:localopt}}
\end{figure*}

We explore the performance of our algorithm in the local optimization setting, where $\cA$ is a classic optimization algorithm, i.e., an algorithm designed for optimization problems where $f$ (and potentially its gradients) can be evaluated at a large number of points. Examples of such algorithms include evolutionary algorithms \citep{back1996evolutionary}, trust-region methods \citep{conn2000trust}, and many gradient-based optimization algorithms \citep{byrd1995limited,boyd2004convex}. This setting reduces to the classical BO setting if $\cA$ can recover the global optimum of $f$. In such case, the INFO-BAX reduces to the classical predictive entropy search acquisition function \citep{hernandez2014predictive} when computed exactly and to the joint entropy search acquisition function \citep{hvarfner2022joint} under the approximation proposed by \cite{neiswanger2021bayesian} we use in our experiments. PS-BAX, in turn, reduces to the classical posterior sampling strategy used in BO \citep{kandasamy2018parallelised}. However, due to its practical relevance and the lack of an empirical comparison between joint entropy search and posterior sampling, we still include this setting in our experiments. We also use this setting to illustrate nuances that arise when choosing a base algorithm.

In our experiments, we use a gradient-based optimization method as a base algorithm instead of an evolutionary algorithm as pursued by \cite{neiswanger2021bayesian}. Gradient-based methods typically exhibit faster convergence than their gradient-free counterparts. However, they are often infeasible if gradients cannot be obtained analytically and instead are obtained, e.g., via finite differences. Since in most applications, analytic gradients of $f$ are unavailable, directly applying such methods on $f$ is infeasible. However,  PS-BAX and INFO-BAX can make use of gradient-based methods thanks to the availability of gradients of most probabilistic models used in practice, including GPs.

We consider the Hartmann and Ackley functions, with input dimensions of 6 and 10, respectively, as test functions. Both functions have many local minima and are standard test functions in the BO literature. For Ackley, we set the batch size to $q=2$. As a performance metric, we report the log10 inference regret, given by $\log_{10}(f^* - f(\hat{x}_n^*))$, where $\hat{x}_n^*$ is obtained by applying $\cA$ on $\mu_n$.  The results of these experiments are depicted in Figure~\ref{fig:localopt}. As a baseline, we also include the expected improvement (EI), arguably the most popular BO acquisition function. On Hartmann-6D, PS-BAX performs on pair with EI, and both algorithms significantly INFO-BAX. Notably, PS-BAX outperforms both INFO-BAX and EI on Ackley-10D.




\subsection{Level Set Estimation}
\label{exp:level_set}

\begin{figure}[t]
\centering
    \begin{subfigure}{0.48\textwidth}
        \includegraphics[width=\textwidth]{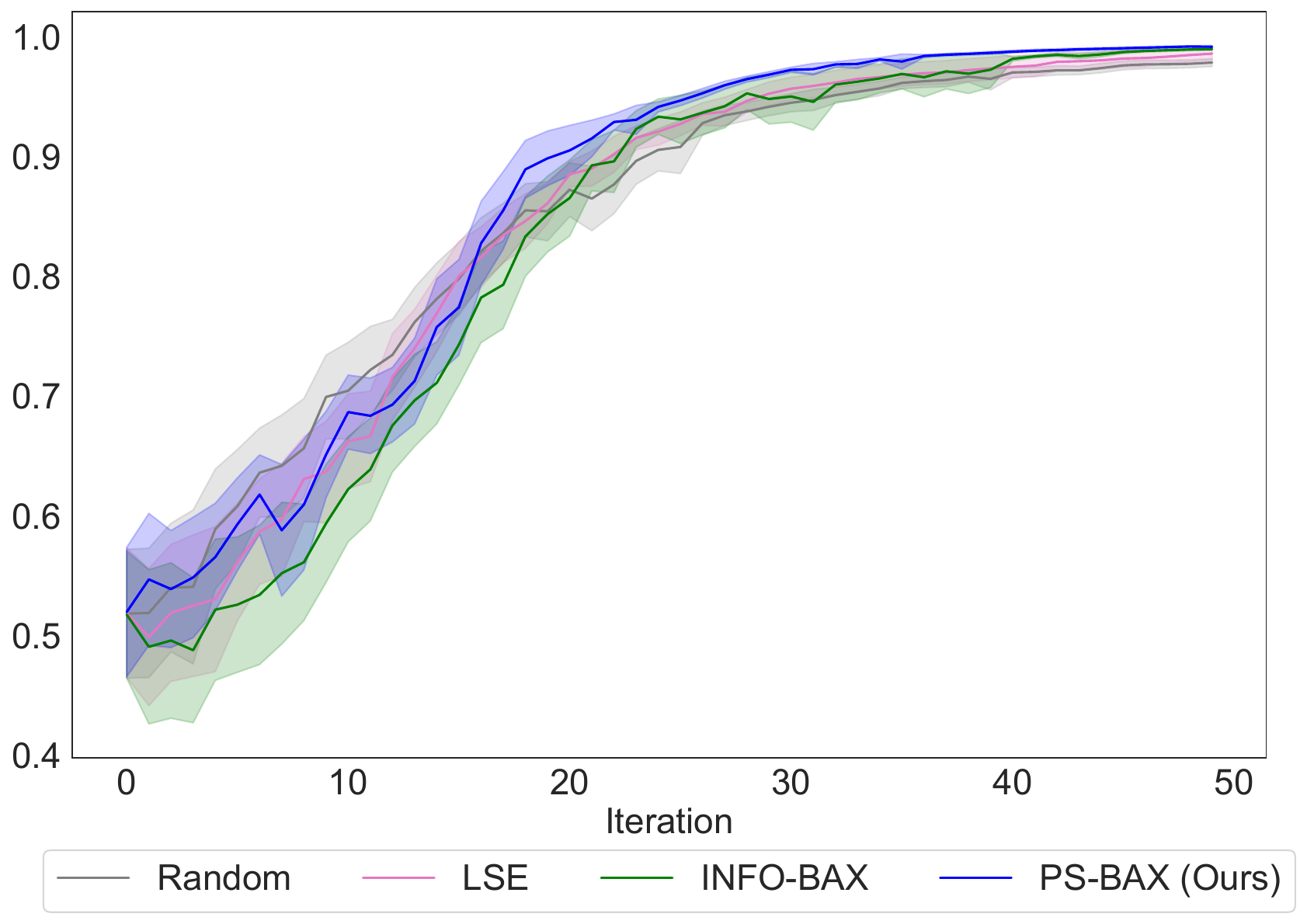}
    \end{subfigure}
    \hfill
    \begin{subfigure}{0.48\textwidth}
        \includegraphics[width=\textwidth]{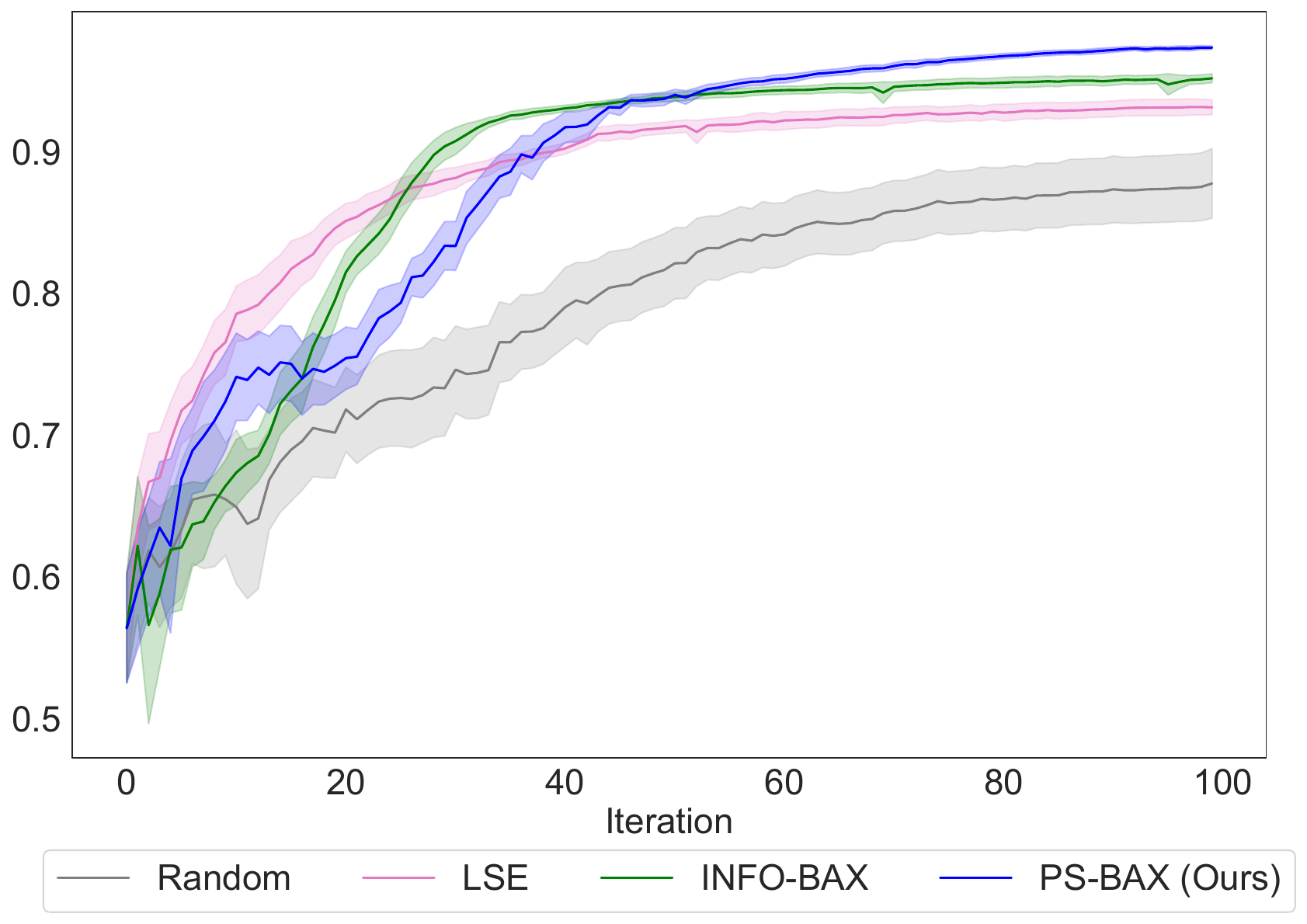}
    \end{subfigure}
    \caption{Results for Level Set Estimation, showing the F1 score (where higher is better). The left and right panels present results for the Himmelblau test function and the topographic mapping problem, respectively. In the former problem, all algorithms perform similarly, while in the latter, PS-BAX outperforms all baselines.}
    \label{fig:level}
    \vspace{-0.2in}
\end{figure}

Level set estimation involves finding all points in $\mathbb{X}$ with $f(x) > \tau$, for a user-specified threshold value $\tau$. This task arises in applications such as environmental monitoring, where a mobile sensing device detects regions with dangerous pollution levels \citep{gotovos2013active}, and topographic mapping, where the goal is to infer the portion of a geographic area above a specified altitude using limited measurements \citep{zanette2019robust}. In this case, the base algorithm $\cA$ ranks all objective values and returns the points where the function value exceeds the threshold.

We evaluate the algorithms on a synthetic problem (the 2-dimensional Himmelblau function) and a real-world topographic dataset, consisting of 87 × 61 height measurements from a large geographic area around Auckland’s Maunga Whau volcano \citep{zanette2019robust}. The threshold $\tau$ is set to the 0.55 quantile of all function values in the domain for both problems. An illustration of single runs on the topographic problem over 100 iterations for both INFO-BAX and PS-BAX is shown in Figure~\ref{fig:demo2d}.

The performance metric used is the F1 score, defined by \begin{equation} F1 = \frac{2TP}{2TP + FP + FN}, \end{equation} where $TP$, $FP$, and $FN$ represent true positives, false positives, and false negatives, respectively. The results of this experiment are shown in Figure~\ref{fig:level}. As an additional baseline specifically designed for level set estimation, we include the popular LSE algorithm proposed by \cite{gotovos2013active}. PS-BAX demonstrates strong performance, outperforming all benchmarks in the topographic mapping problem.

\begin{figure*}[h]
\centering
    \begin{subfigure}{0.48\textwidth}
        \includegraphics[width=\textwidth]{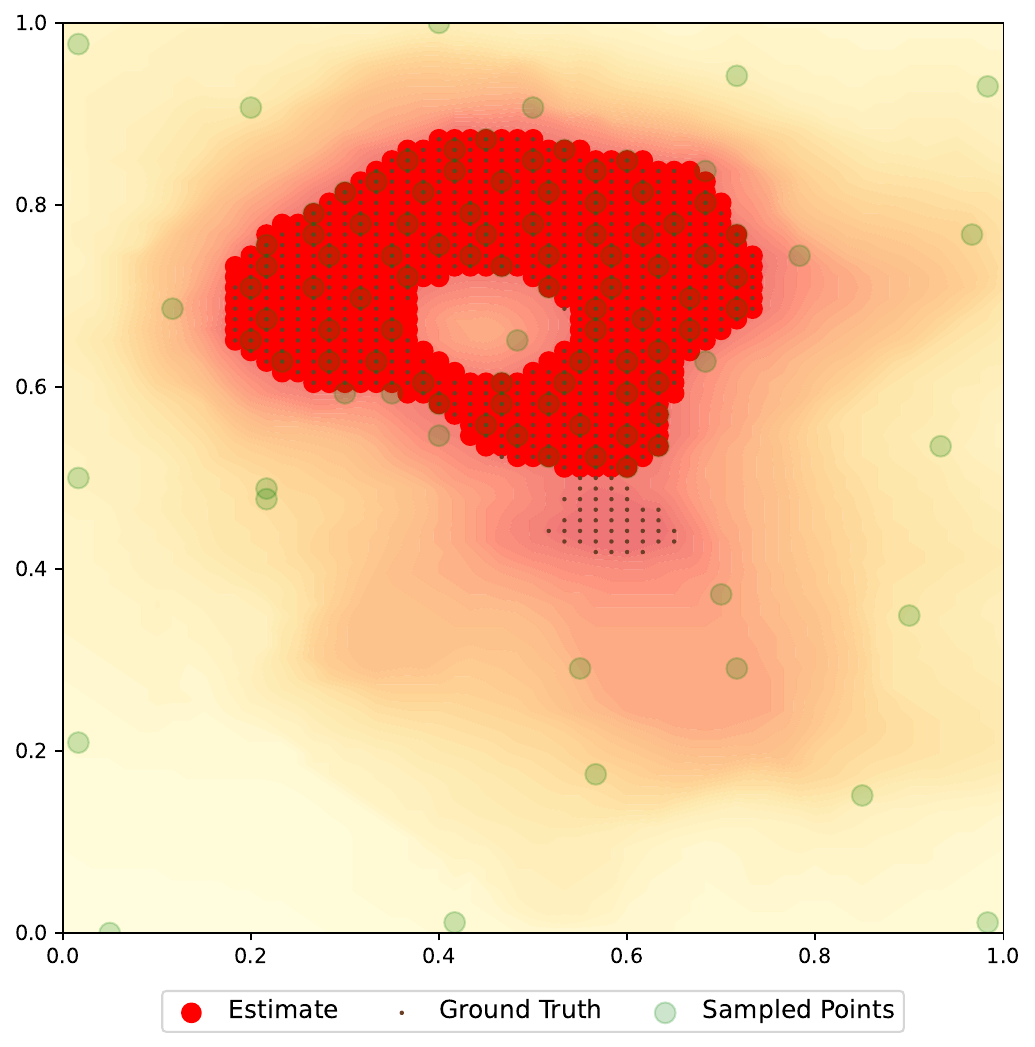}
    \end{subfigure}
    \hfill
    \begin{subfigure}{0.48\textwidth}
        \includegraphics[width=\textwidth]{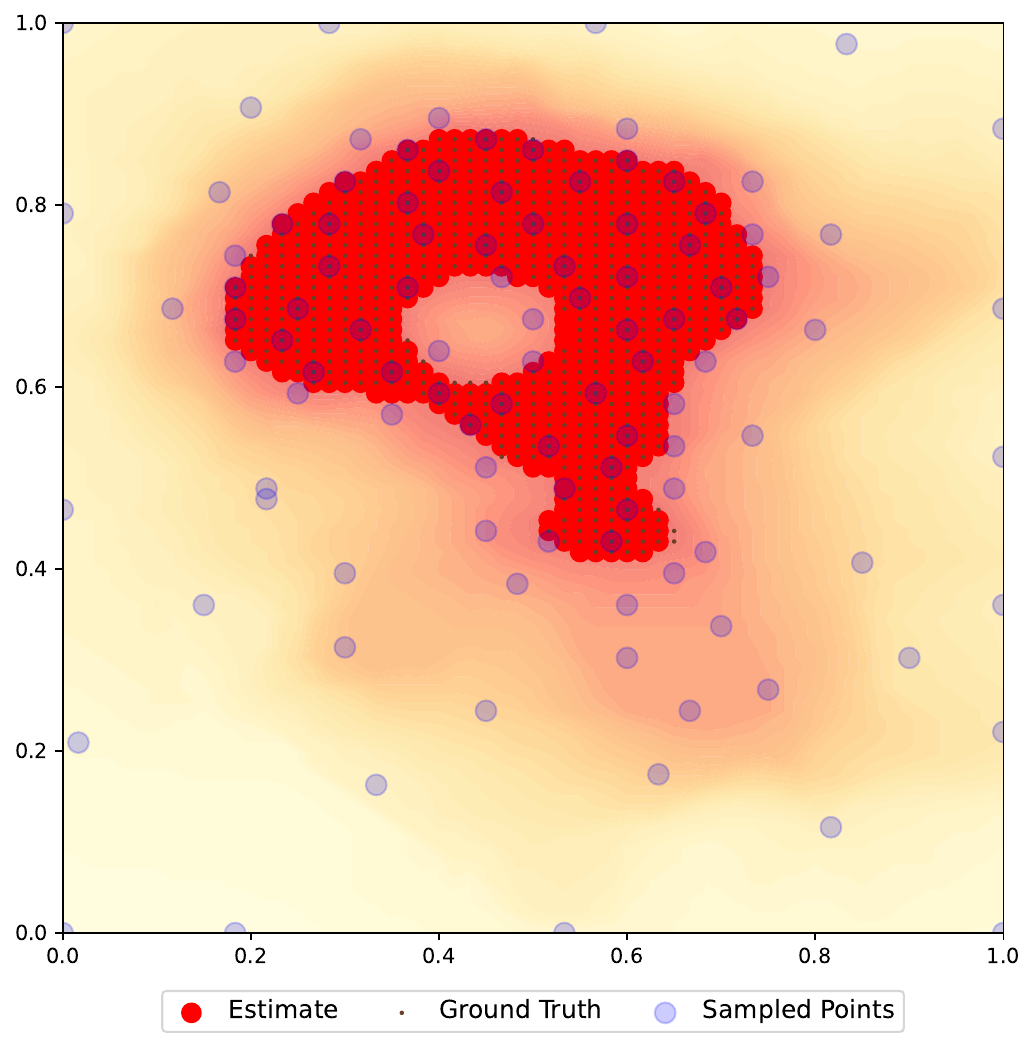}
    \end{subfigure}
    \caption{Depiction of the INFO-BAX (left) and PS-BAX (right) algorithms on the topographic level set estimation problem described in Section~\ref{exp:level_set}. Each figure shows the ground truth super-level set (small black dots), the points evaluated after 100 iterations (green and blue dots for INFO-BAX and PS-BAX, respectively), and the estimated level set from the final posterior mean (red dots). PS-BAX provides an accurate estimate of the level set, whereas INFO-BAX misses a significant portion.\label{fig:demo2d}}  
\end{figure*}

\subsection{Top-$k$ Estimation}
We consider the top-$k$ estimation setting, where $\mathbb{X}$ is a finite (but potentially large) set, and the goal is to identify the $k$ points with the largest values of $f(x)$. In this scenario, the base algorithm evaluates $f$ at all points in $\mathbb{X}$ and returns the $k$ best points. Following \cite{neiswanger2021bayesian}, we use  as performance metric the Jaccard distance between the estimated output $S_n = \cO_{\cA}(\mu_n)$ and the ground truth optimal set $S^*$, defined by
\begin{equation}
    d(S_n, S^*) = 1 - \frac{|S_n \cap S^*|}{|S_n \cup S^*|}.
\end{equation}

We consider two test problems. The first problem uses 3-dimensional Rosenbrock function, a standard benchmark in the optimization literature. The input space is obtained by taking a uniform grid of 1,000 points over $[-2,2]^3$. For this problem we set $k=4$.

The second problem is a real-world top-$k$ ($k=10$) selection task in protein design, where the goal is to maximize stability fitness predictions for the Guanine nucleotide-binding protein GB1, given different sequence mutations in a target region of 4 residues \citep{wu2016adaptation}. There are $20^4$ possible combinations, with 20 amino acids and 4 positions, and we represent the input space $\mathbb{X}$ as one-hot vectors in an 80-dimensional space. To avoid excessive runtimes, we randomly sample 10,000 points from the original dataset. GB1 is well-studied by biologists, and its domain is known to be highly rugged, dominated by "dead" variants with very low fitness scores \citep{wittmann2020machine}. Due to the high dimensionality, vast input space, and sparse fitness landscape, this dataset poses significant challenges for standard GP models. Therefore, we use a deep kernel GP \citep{wilson2016deep} as our probabilistic model. Given the dataset's size, we perform batched evaluations (batch size of 4) for both PS-BAX and INFO-BAX.

The results of these experiments are shown in Figure~\ref{fig:topk}. In both problems, PS-BAX performs comparably to INFO-BAX, with both algorithms significantly outperforming Random.


\begin{figure*}[h]
\centering
    \begin{subfigure}{0.48\textwidth}
        \includegraphics[width=\textwidth]{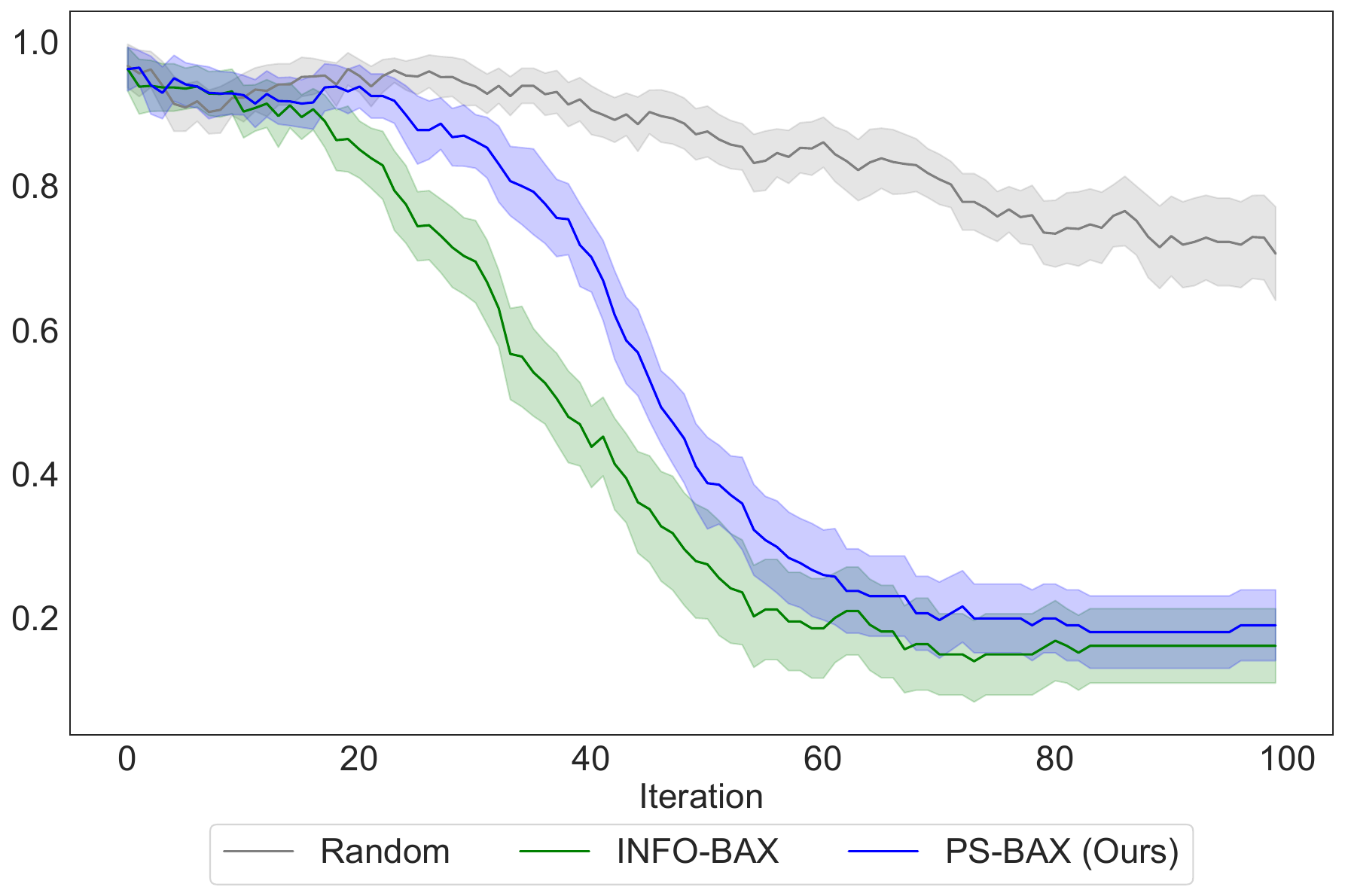}
    \end{subfigure}
    \hfill
    \begin{subfigure}{0.48\textwidth}
        \includegraphics[width=\textwidth]{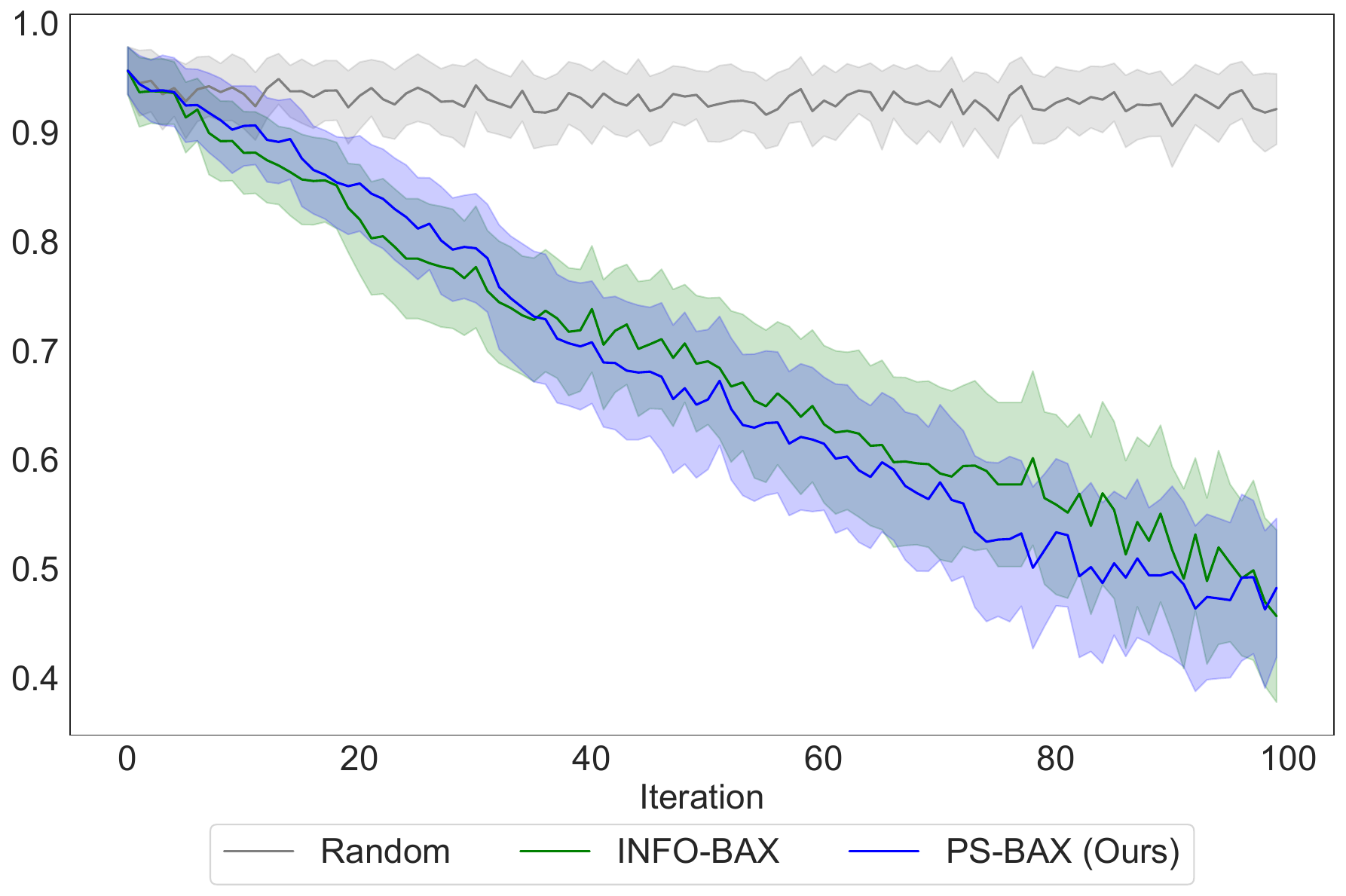}
    \end{subfigure}
    \caption{Results for Top-$k$ Estimation, showing the Jaccard distance (where lower is better). The left panel presents results for the 3-dimensional Rosenbrock test function with $k=4$, while the right panel shows results for the real-world protein design GB1 dataset with $k=10$. In both problems, PS-BAX performs similarly to INFO-BAX. \label{fig:topk}}  
\end{figure*}

\subsection{DiscoBAX: Drug Discovery Application}

As a final application, we consider the DiscoBAX problem setting from \cite{lyle2023discobax} in the context of drug discovery, where the task is to identify a set of optimal genomic interventions to determine suitable drug targets. Formally, let $\mathbb{X}$ represent a pool of genetic interventions, and for each $x \in \mathbb{X}$, let $f(x)$ denote an in vitro phenotype measurement correlated with the effectiveness of genetic intervention $x$. The effectiveness of the intervention is assumed to be $f(x) + \eta(x)$, where $\eta(x)$ captures noise and other exogenous factors not reflected in the in vitro measurement. Following the setup in \cite{lyle2023discobax}, we simulate $\eta$ using a GP with mean 0 and an RBF covariance function. The goal is to identify a small set of genomic interventions in $\mathbb{X}$ that maximize an objective function balancing two characteristics: high expected change in the target phenotype and high diversity to maximize success in subsequent stages of drug development. This is formalized in \cite{lyle2023discobax} as the following optimization problem:
\begin{equation} 
\label{eq:discobax} \max_{S \subset \mathbb{X}: |S| = k} \mathbb{E}_{\eta}\left[\max_{x \in S} f(x) + \eta(x)\right], \end{equation}
where $k$ is the desired size of the intervention set. This problem aims to find a set of interventions $S$ such that the best-performing intervention in $S$ has the highest expected effectiveness (over $\eta$). Solving Equation~\ref{eq:discobax} exactly is challenging due to its combinatorial nature, but a computationally efficient approximation is possible by leveraging the submodularity of the objective function. We refer the reader to \cite{lyle2023discobax} for more details.

Following \cite{lyle2023discobax}, we use the tau protein assay \cite{sanchez2021genome} and interferon-gamma assay \cite{schmidt2021crispr} datasets from the Achilles project \cite{dempster2019extracting}. Originally, the gene embeddings in this dataset are represented as 808-dimensional vectors, and a Bayesian MLP is used as the probabilistic model instead of a GP. To reduce dimensionality, we preprocess the dataset using Principal Component Analysis (PCA) and then fit a GP to the lower-dimensional representation. Additionally, we truncate the dataset to the 5000 genes with the highest intervention values to ensure computational feasibility in our experiments. The results of these experiments are shown in Figure~\ref{fig:discobax}. PS-BAX significantly outperforms INFO-BAX, whose performance is only marginally better than that of Random.


\begin{figure}[h]
    \centering
    \includegraphics[width = 0.48\textwidth]{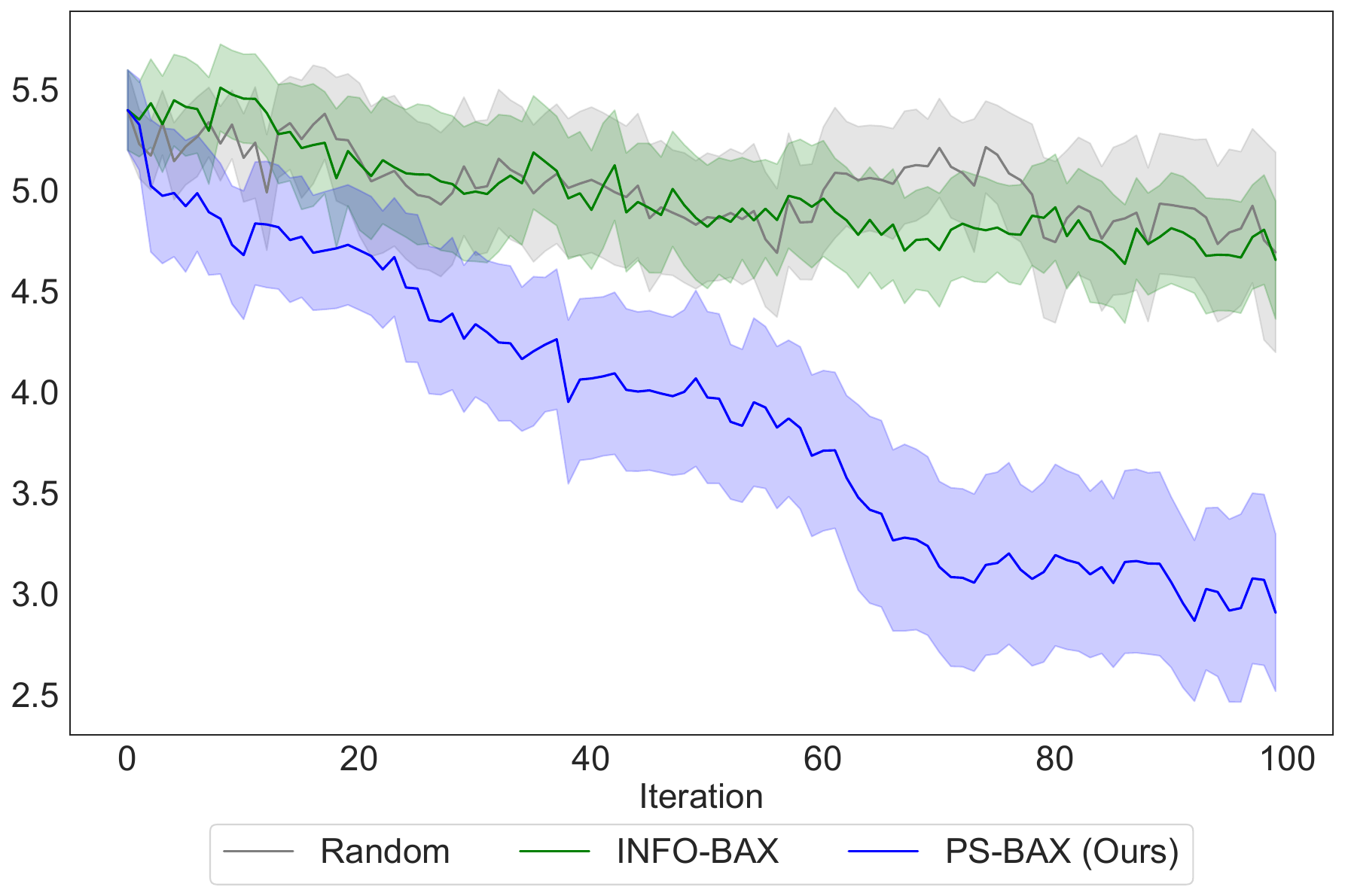}
    \hfill
    \includegraphics[width = 0.48\textwidth]{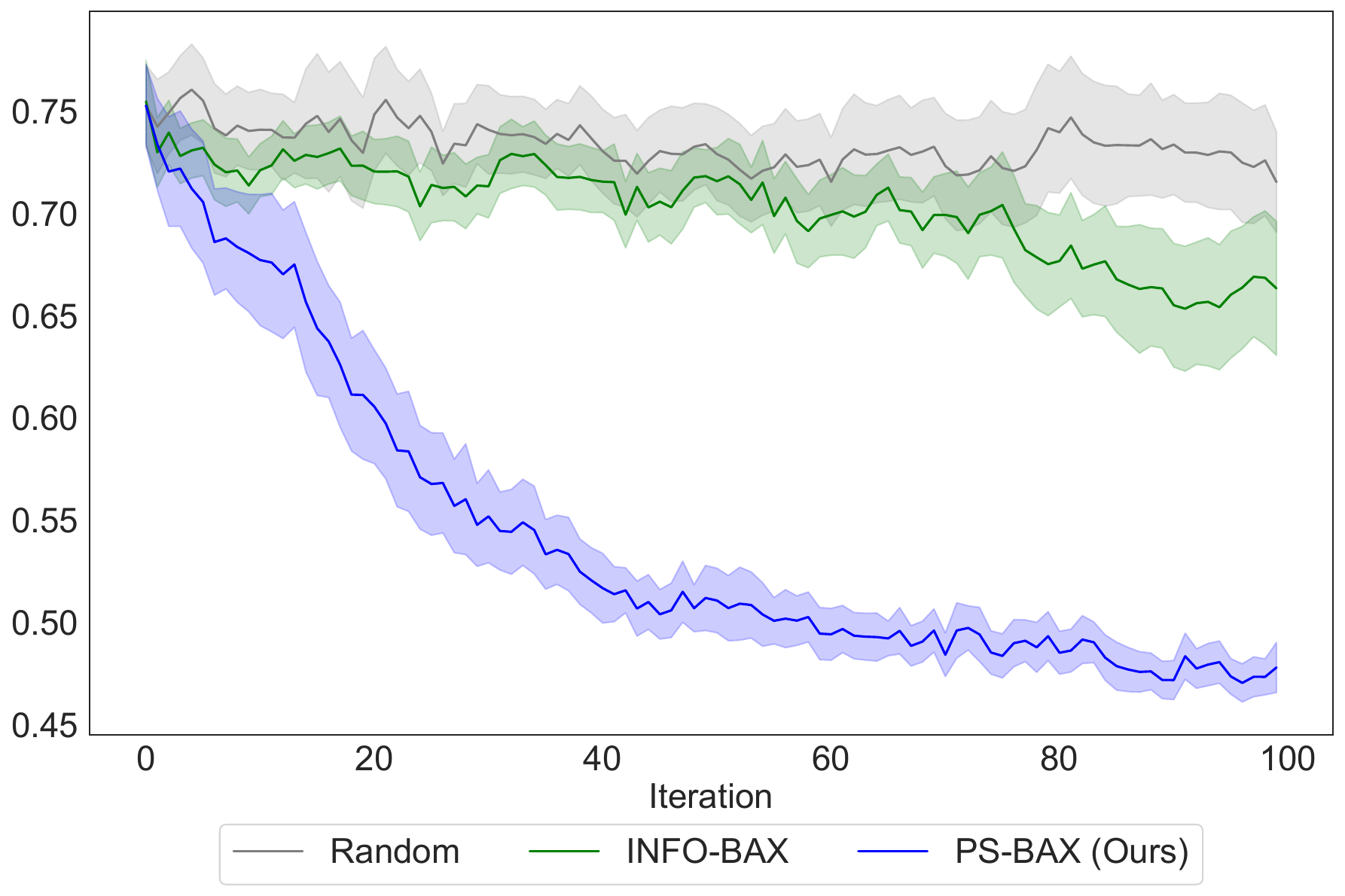}
    \caption{Results for DiscoBAX \cite{lyle2023discobax}, showing the regret between the solution found by applying a greedy submodular optimization algorithm to the objective in Equation~\ref{eq:discobax} and the solution obtained from applying the same algorithm over the posterior mean instead of the true function. Both problems are based on the Achilles dataset \cite{dempster2019extracting}, with the left panel presenting results for the tau protein assay \cite{sanchez2021genome} and the right panel showing results for the interferon-gamma assay. In both cases, PS-BAX significantly outperforms INFO-BAX, which performs only marginally better than Random.}
    \label{fig:discobax}
\end{figure}

\section{Conclusion}
\label{sec:conclusion}
Many real-world problems involve estimating the output of a base algorithm applied to a black-box function with costly evaluations. While the expected information gain strategy proposed by \cite{neiswanger2021bayesian} offers a solution, it faces practical limitations. In response, we introduced PS-BAX, a novel posterior sampling strategy built upon the observation that, in many cases, the algorithm's output can be characterized as a target set of input points. Our experiments demonstrate that PS-BAX is not only competitive with previous approaches but also significantly faster to compute. Moreover, we established conditions under which PS-BAX is asymptotically convergent.

Looking ahead, our approach provides a pathway to extend the success of Bayesian optimization to a broader range of problems, potentially unlocking new and impactful applications. Additionally, PS-BAX serves as a robust baseline for future research aimed at developing tailored strategies for specific domains. Furthermore, our findings offer fresh perspectives on posterior sampling algorithms and their application scope, suggesting several promising avenues for future exploration in this area.

\begin{ack}
This work was performed under the auspices of the U.S. Department of Energy by Lawrence Livermore National Laboratory under Contract DE-AC52-07NA27344. 
LLNL-CONF-864204.  
The GUIDE program is executed by the Joint Program Executive Office for Chemical, Biological, Radiological and Nuclear Defense’s (JPEO-CBRND) Joint Project Lead for CBRND Enabling Biotechnologies (JPL CBRND EB) on behalf of the Department of Defense’s Chemical and Biological Defense Program. 
This effort was in collaboration with the Defense Health Agency (DHA) COVID funding initiative. 
The views expressed in this paper reflect the views of the authors and do not necessarily reflect the position of the Department of the Army, Department of Defense, nor the United States Government. References to non-federal entities do not constitute nor imply Department of Defense or Army endorsement of any company or organization.

\end{ack}

\bibliography{bib}

\appendix
\section{Computation of the Expected Information Gain}
\label{app:eig}
Let $\mathbf{E}$ and $\mathbf{H}$ denote the expectation and (differential) entropy operators, respectively. At each iteration $n$, the expected information gain between the $\cO_\cA(f)$ and a new observation of $f$ at $x$, denoted by $y_x$, can be written as
\begin{equation}
  \mathrm{EIG}_n(x) =  \mathbf{H}\left[y_x\mid \cD_n\right] - \mathbf{E}\left[\mathbf{H}\left[y_x\mid \cD_n, \cO_\cA(f)\right]\mid \cD_n\right]. 
\end{equation}
Under the probabilistic model established above, the conditional distribution of $y_x$ given $\cD_n$ is Gaussian, allowing the analytical computation of $\mathbf{H}[y_x\mid \cD_n]$. However, in most cases, $\mathbf{H}[y_x\mid \cD_n, \cO_\cA(f)]$ cannot be computed analytically. In particular, this is true in our setting, where $\cO_\cA(f)$ is a subset of $\mathbb{X}$. 

To address this challenge, \cite{neiswanger2021bayesian} introduced an approximation where $\cO_\cA(f)$ is replaced by a set of pairs $(x', f(x'))$ for inputs $x'$ evaluated when $\cA$ is applied on $f$. 
The corresponding approximation of $\mathrm{EIG}_n$, denoted by $\mathrm{EIG}_n^v$, is then given by
\begin{equation}
\mathrm{EIG}_n^v(x) = \mathbf{H}\left[y_x\mid \cD_n\right] - \mathbf{E}\left[\mathbf{H}\left[y_x\mid \cD_n, \{(x', f(x')): x'\in\cO_\cA(f)\}\right]\mid \cD_n\right].   
\end{equation}
The advantage of this approximation is that, again, $\mathbf{H}[y_x\mid \cD_n, \{(x', f(x')): x'\in\cO_\cA(f))\}$ can be computed analytically under a Gaussian posterior.

The expectation $\mathbf{E}\left[\mathbf{H}\left[y_x\mid \cD_n, \{(x', f(x')): x'\in\cO_\cA(f)\}\right]\mid \cD_n\right]$ still requires to be approximated via Monte Carlo sampling. Concretely, this can be achieved by drawing $L$ samples from the posterior over $f$ given $\cD_n$, denoted by $\tilde{f}_{n,1}, \ldots, \tilde{f}_{n,L}$, and setting
\begin{equation}
\label{eq:eigmc}
 \mathrm{EIG}_n^v(x) \approx \mathbf{H}[y_x\mid \cD_n] - \frac{1}{L}\sum_{\ell=1}^L\mathbf{H}[y_x\mid \cD_n, \{(x', f(x')): x'\in\cO_\cA(\tilde{f}_{n,\ell})\}].  
\end{equation}
This is the approximation of $\mathrm{EIG_n}$ that we use in our experiments in Section~\ref{sec:experiments}, i.e., at each iteration, we set $x_n$ to be a point that maximizes the expression in Equation~\ref{eq:eigmc}. For brevity, we refer to this acquisition function simply as $\mathrm{EIG_n}$.

\section{Computational Complexity of PS-BAX and INFO-BAX}
\label{app:complexity}
Given a Gaussian process posterior, we analyze the computational complexity of selecting the next evaluation point for both PS-BAX and INFO-BAX. Our analysis excludes the cost of generating a sample from the posterior, which is fixed and depends only on the number of Fourier features used. We also assume that the cost of evaluating such a sample at any given point is 1, as is the cost of evaluating the posterior mean and covariance. Additionally, we assume that the function domain $\mathcal{X}$ is discrete with $|\mathcal{X}| = N$, the algorithm output has a fixed cardinality $|\mathcal{O}_{\mathcal{A}}(f)| = M$, the number of execution paths to approximate the posterior entropy is $L$, and running the algorithm on any input function requires $P$ evaluations. As we show next, the computational cost of INFO-BAX can be significantly higher than that of PS-BAX when $N$, $M$, or $L$ is large.

For PS-BAX, the complexity is $O(P + M)$, reflecting the cost of running the algorithm once on a single function sample and maximizing the posterior variance over the sampled target set. For INFO-BAX, the complexity is $O\left((P + M^3 + N \cdot M^2) \cdot L\right)$. For each function sample, we need to execute the algorithm ($P$), condition on $M$ new points to compute the conditional entropy ($M^3$), and evaluate the posterior variance of the fantasized model on $N$ points ($N \cdot M^2$). This process is repeated for $L$ function samples.

\section{Batch Extensions of PS-BAX and INFO-BAX}
\label{app:batch}
In this section, we discuss extensions of the PS-BAX and INFO-BAX algorithms to the batch setting, where at each iteration, we generate $q$ new points for evaluation, denoted by $x_{n,1}, \ldots, x_{n,q}$. These extensions are inspired by batch versions of the posterior sampling algorithm \citep{kandasamy2018parallelised} and the joint entropy search acquisition function \cite{hvarfner2022joint} from BO. Figure~\ref{fig:batch} illustrates the performance of PS-BAX under various batch sizes in two of our test problems.

\paragraph{Batch PS-BAX} We extend PS-BAX to the batch setting, following the approach proposed by \cite{kandasamy2018parallelised}. For a batch size of $q$, we draw $q$ independent samples from the posterior on $f$, denoted by $\tilde{f}_1, \ldots, \tilde{f}_q$, and define the set $X_n = \cup_{i=1}^q \cO_\cA(\tilde{f}_{n,i})$. We then select the points $x_{n,1}, \ldots, x_{n,q} \in S$ iteratively by choosing the point in $S$ with the highest posterior entropy, conditioned on the previously selected points, as follows:
\begin{align*}
    x_{n,1} =& \; \argmax_{x \in X_n} \; \mathbf{H}\left[ f(x) \mid \cD_n \right],\\
    \vdots & \\
    x_{n,q} =& \; \argmax_{x \in X_n} \; \mathbf{H}\left[ f(x) \mid \cD_n \cup \{x_{n,1}, \ldots, x_{n,q-1}\} \right]
\end{align*}

\paragraph{Batch INFO-BAX}
INFO-BAX can be naturally extended to the batch setting by considering the EIG over a batch of $q$ points. However, directly optimizing the EIG in this scenario involves optimizing over $\mathbb{X}^q$, which is usually computationally prohibitive. To address this, we use a greedy optimization approach. Specifically, we select $x_{n,1}, \ldots, x_{n,q}$ iteratively by maximizing the EIG conditioned on the previously selected points, as follows:
\begin{align*}
    x_{n,1} =& \; \argmax_{x \in \mathbb{X}} \mathbf{H}\left[y_x\mid \cD_n\right] - \mathbf{E}\left[\mathbf{H}\left[y_x\mid \cD_n, \cO_\cA(f)\right]\mid \cD_n\right], \\
     \vdots & \\
    x_{n,q} =& \; \argmax_{x \in \mathbb{X}} \mathbf{H}\left[y_x\mid \cD_n \cup \{x_{n,1}, \ldots, x_{n,q-1}\} \right] \\  & - \mathbf{E}\left[\mathbf{H}\left[y_x\mid \cD_n \cup \{x_{n,1}, \ldots, x_{n,q-1}\}, \cO_\cA(f)\right]\mid \cD_n \cup \{x_{n,1}, \ldots, x_{n,q-1}\} \right].
\end{align*}

\begin{figure}[h]
    \centering
    \includegraphics[width = 0.48\textwidth]{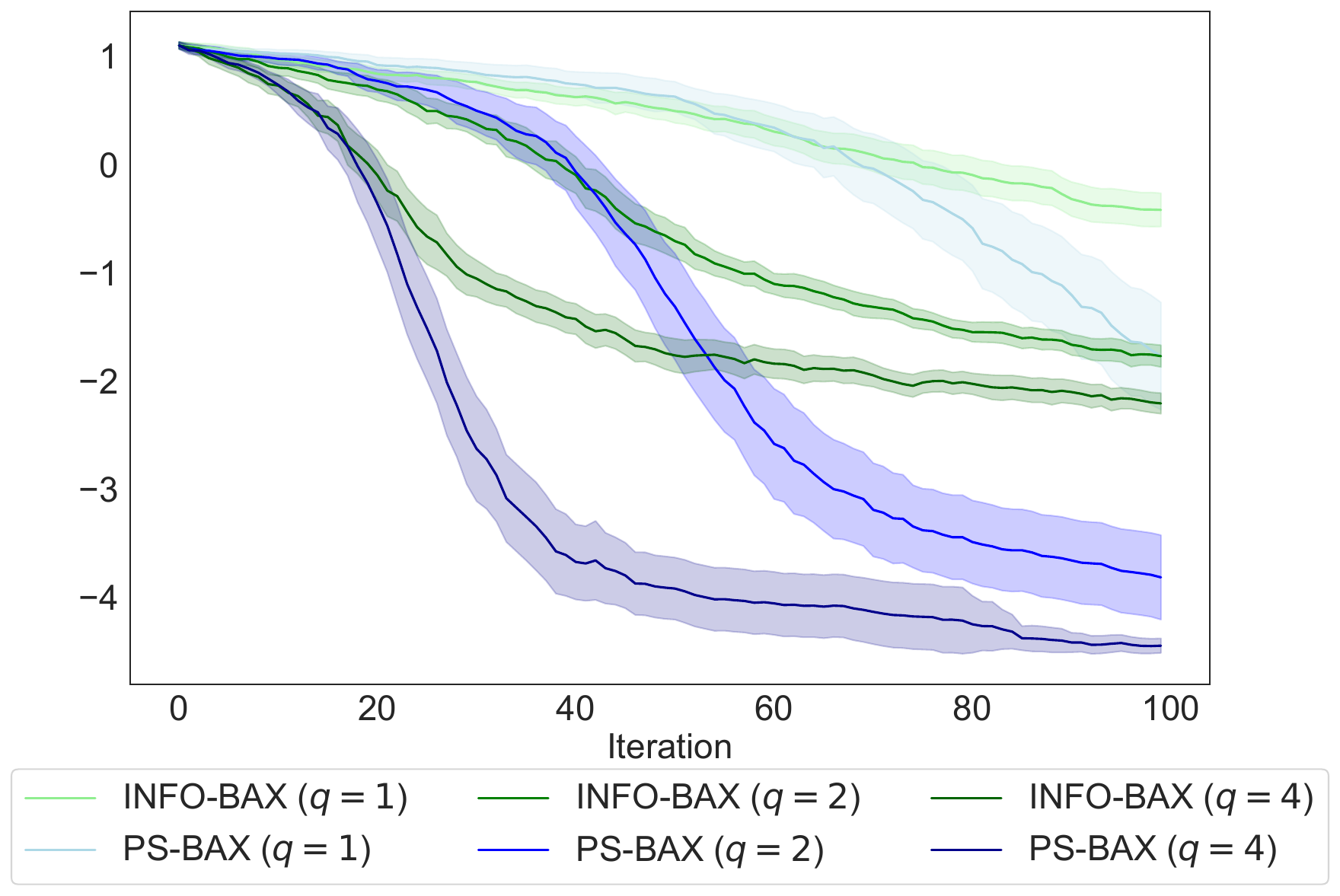}
    \hfill
    \includegraphics[width = 0.48\textwidth]{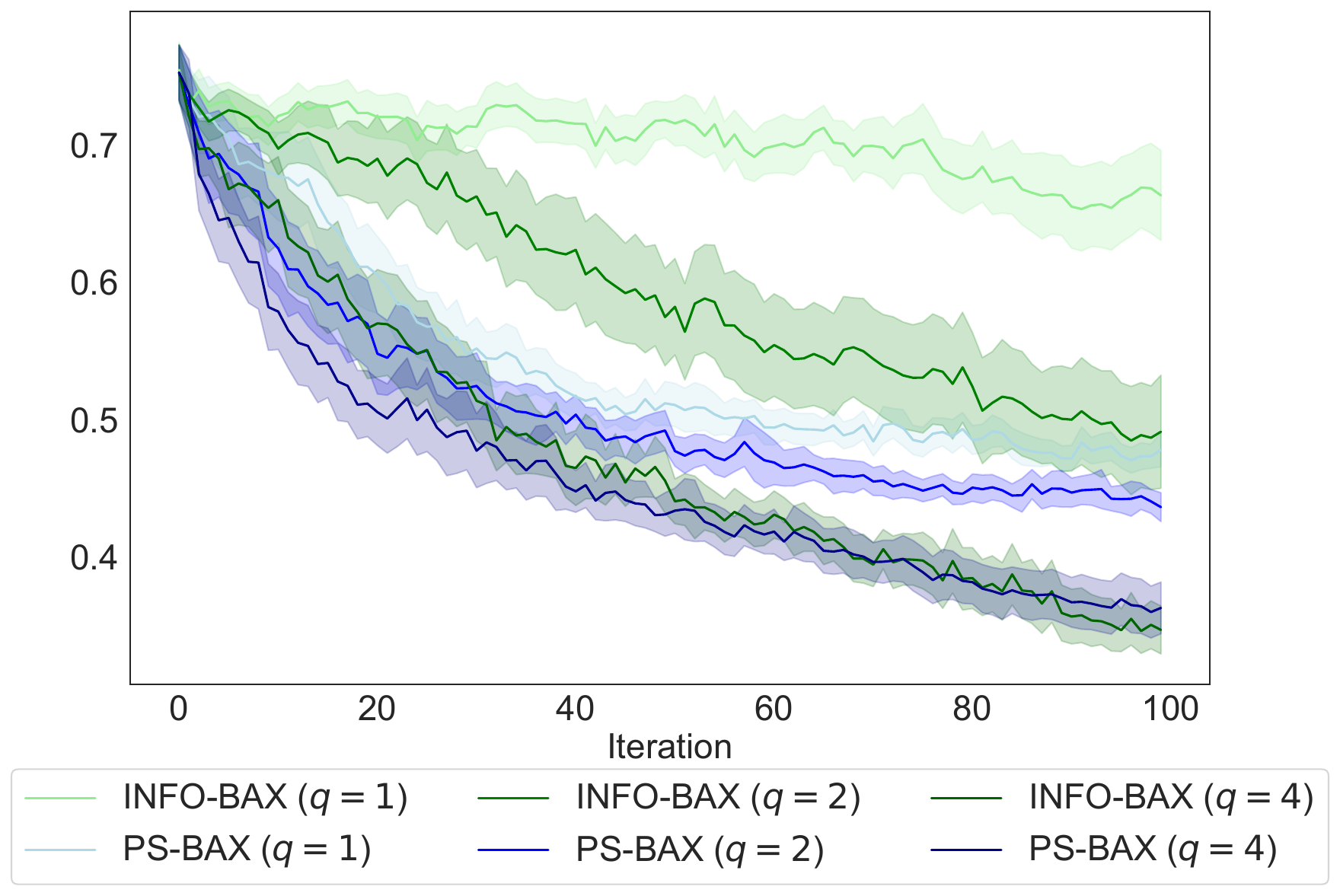}
    \caption{Performance of PS-BAX and INFO-BAX under batch sizes $q = 1, 2, 4$ on the
        local optimization Ackley-10D problem (left) and the DiscoBAX interferon-gamma essay problem (right).}
    \label{fig:batch}
\end{figure}


\setcounter{theorem}{0}

\section{Proofs of Theorems 1 and 2}
\label{app:proofs}
We begin by introducing the following notation. Let $\cF_n$ denote the $\sigma$-algebra generated by $\cD_{n-1}$, and let $\cF_\infty$ denote the minimal $\sigma$-algebra generated by the sequence $\{\cF_n\}_{n=1}^\infty$. We denote the conditional probability measures induced by $\cF_n$ and $\cF_\infty$ as $\prob_n$ and $\prob_\infty$, respectively.

\subsection{Proof of Theorem 1}
Before proving Theorem 1, we establish an auxiliary lemma. Both Lemma~\ref{lemma1} and Theorem~\ref{thm:consist} assume the prior distribution and likelihood discussed in Section~\ref{sec:algo}. In particular, this implies that for each $x \in \mathbb{X}$, the posterior distribution of $f(x)$ given $\cF_n$ is Gaussian, with mean and standard deviation denoted by $\mu_n(x)$ and $\sigma_n(x)$, respectively.
\begin{lemma}
\label{lemma1}
Suppose that $\mathbb{X}$ is finite, and let $S_\infty = \{x \in \mathbb{X} : \exists  \ X \subset \mathbb{X} \ \mathrm{s.t.} \ x \in X \wedge \prob_\infty(\cO_\cA(f) = X) > 0\}$. Then, both $S_\infty$ and the image of $f$ over this set are $\cF_\infty$-measurable.
\end{lemma}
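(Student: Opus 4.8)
The plan is to prove the two measurability claims separately, reducing everything to a per-point statement since $\mathbb{X}$ is finite: it suffices to show that for each $x \in \mathbb{X}$ the event $\{x \in S_\infty\}$ lies in $\cF_\infty$ and that the random variable $f(x)\mathbf{1}\{x \in S_\infty\}$ is $\cF_\infty$-measurable. The first of these is elementary; the second is the substantive part and hinges on showing that PS-BAX drives the posterior uncertainty of every point of $S_\infty$ to zero.

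\textbf{Measurability of $S_\infty$.} For each fixed $X \subset \mathbb{X}$, the map $\omega \mapsto \prob_\infty(\cO_\cA(f) = X)$ is, by the definition of regular conditional probability given $\cF_\infty$, an $\cF_\infty$-measurable function. Hence $\{\prob_\infty(\cO_\cA(f) = X) > 0\} \in \cF_\infty$, and since $\mathbb{X}$ is finite, $\{x \in S_\infty\} = \bigcup_{X \ni x} \{\prob_\infty(\cO_\cA(f) = X) > 0\}$ is a finite union of $\cF_\infty$-measurable sets. Thus $S_\infty$, viewed as a random element of the finite power set $2^{\mathbb{X}}$, is $\cF_\infty$-measurable.

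\textbf{Measurability of $f$ on $S_\infty$.} Fix $x$ and work on the event $\{x \in S_\infty\}$, so that some $X \ni x$ satisfies $\prob_\infty(\cO_\cA(f) = X) > 0$. I would argue in four steps. First, the defining feature of posterior sampling gives the identity $\prob(X_n = X \mid \cF_n) = \prob_n(\cO_\cA(f) = X)$, because $\tilde f_n$ is drawn from the posterior $p_n$ and $X_n = \cO_\cA(\tilde f_n)$. Second, by L\'evy's upward convergence theorem $\prob_n(\cO_\cA(f) = X) \to \prob_\infty(\cO_\cA(f) = X)$ almost surely, so these conditional probabilities stay bounded away from $0$ on $\{x \in S_\infty\}$ and their sum diverges. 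Third, the conditional (second) Borel--Cantelli lemma then yields $\{X_n = X \text{ infinitely often}\}$ almost surely on this event. Fourth, whenever $X$ is sampled infinitely often, the within-set selection rule $x_n \in \argmax_{x' \in X} \sigma_n(x')$ forces $\max_{x' \in X} \sigma_n(x') \to 0$: using that the posterior standard deviation is non-increasing in $n$ and that a point observed infinitely often has vanishing posterior variance, one argues by contradiction that no point of $X$ can retain positive limiting variance, since such a point would eventually attain the maximum and be queried repeatedly. In particular $\sigma_n(x) \to 0$; as a random variable with zero conditional variance given $\cF_\infty$ equals its $\cF_\infty$-conditional expectation, $f(x) = \lim_n \mu_n(x)$ is $\cF_\infty$-measurable on $\{x \in S_\infty\}$. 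Combined with the measurability of $S_\infty$, this shows the image of $f$ over $S_\infty$ is $\cF_\infty$-measurable.

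The main obstacle is the fourth step, which couples the algorithm's stochastic selection with the deterministic monotone decay of posterior variance; the contradiction argument requires care to track which point of $X$ is selected along the infinitely many rounds in which $X$ is sampled, and to invoke that repeated noisy observation of a single point collapses its posterior variance. A secondary technical point is that the conditional Borel--Cantelli step should be carried out on an enlarged filtration recording the algorithm's internal sampling randomness, since $X_n$ is not $\cF_{n+1}$-measurable; this does not affect the conclusions, which concern only $\cF_\infty = \sigma(\bigcup_n \cF_n)$.
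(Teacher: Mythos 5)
Your proposal follows essentially the same route as the paper's proof: reduce to per-point measurability, show that the sampled set $X_n$ equals $X$ infinitely often on the event $\prob_\infty(\cO_\cA(f)=X)>0$, deduce $\sigma_n(x')\to 0$ for all $x'\in X$ from the within-set argmax selection rule, and conclude that $f(x)=\lim_n\mu_n(x)$ is $\cF_\infty$-measurable. In fact you make explicit two steps the paper leaves compressed---the conditional Borel--Cantelli justification of the ``infinitely often'' claim and the variance-collapse argument that the paper delegates to Proposition 2.9 of Bect et al.---so the argument matches the paper's and is, if anything, more complete.
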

\begin{proof}
Clearly, the set $S_\infty$ is $\cF_\infty$-measurable. Thus, it suffices to show that for each $x \in S_\infty$, $f(x)$ is $\cF_\infty$-measurable.

Let $x \in S_\infty$. By definition, there exists a subset $X \subset \mathbb{X}$ such that $\prob_\infty(\cO_\cA(f) = X) > 0$. Moreover, a standard martingale argument shows that $\lim_{n \rightarrow \infty} \prob_n(\cO_\cA(f) = X) = \prob_\infty(\cO_\cA(f) = X)$ almost surely. Consequently, there exists $\epsilon > 0$ such that $\prob_n(\cO_\cA(f) = X) > \epsilon$ for all sufficiently large $n$, implying that the event $X_n = X$ occurs infinitely often.

It is not hard to see that $\sigma_n(x) \rightarrow 0$ as $n \rightarrow \infty$ if $x$ is selected infinitely often. Furthermore, since $x_n = \argmax_{x' \in X_n} \sigma_n(x')$, $\mathbb{X}$ is finite, and $X_n = X$ occurs infinitely often, it follows that $\sigma_n(x') \rightarrow 0$ as $n \rightarrow \infty$ for each $x' \in X$, and in particular for $x$. By Proposition 2.9 in \cite{bect2019supermartingale}, it follows that $\mu_n(x) \rightarrow f(x)$. Since $\mu_n(x)$ is $\cF_\infty$-measurable for each $n$, it follows that $f(x)$ is $\cF_\infty$-measurable for each $x \in S_\infty$.
\end{proof}

\begin{theorem} Suppose that $\mathbb{X}$ is finite and that the target set estimated by $\cA$ is complement-independent. If the sequence of points $\{x_n\}_{n=1}^\infty$ is chosen according to the PS-BAX algorithm, then for each $X \subset \mathbb{X}$, $\lim_{n \rightarrow \infty} \mathbf{P}_n(\cO_\cA(f) = X) = \mathbf{1}\{\cO_\cA(f) = X\}$ almost surely for $f$ drawn from the prior. \end{theorem}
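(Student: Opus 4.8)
The plan is to combine the almost-sure martingale convergence of the posterior with Lemma~\ref{lemma1} and complement-independence to show that the limiting posterior is a point mass concentrated on the true target set. First I would fix $X \subset \mathbb{X}$ and observe that $\mathbf{P}_n(\cO_\cA(f) = X) = \mathbf{E}[\mathbf{1}\{\cO_\cA(f) = X\} \mid \cF_n]$ is a uniformly bounded martingale, so by L\'evy's upward convergence theorem (the ``standard martingale argument'' already used in Lemma~\ref{lemma1}) it converges almost surely to $Z^X_\infty := \mathbf{E}[\mathbf{1}\{\cO_\cA(f) = X\}\mid \cF_\infty] = \mathbf{P}_\infty(\cO_\cA(f) = X)$. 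The whole theorem then reduces to proving that $\cO_\cA(f)$ is $\cF_\infty$-measurable, i.e.\ that $\mathbf{P}_\infty$ is a Dirac mass at the true target set.

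Next I would record two facts. First, the true target set has positive limiting posterior mass almost surely: since $\{Z^X_\infty = 0\}$ is $\cF_\infty$-measurable, $\mathbf{P}(\cO_\cA(f)=X,\, Z^X_\infty = 0) = \mathbf{E}[\mathbf{1}\{Z^X_\infty=0\}\, Z^X_\infty] = 0$, and summing over the finitely many $X \subset \mathbb{X}$ gives $Z^{\cO_\cA(f)}_\infty > 0$ almost surely; in particular $\cO_\cA(f) \subseteq S_\infty$, with $S_\infty$ as in Lemma~\ref{lemma1}. Second, by Lemma~\ref{lemma1} the restriction of $f$ to $S_\infty$ is $\cF_\infty$-measurable; equivalently $\sigma_\infty(x) = 0$ for every $x \in S_\infty$, so conditional on $\cF_\infty$ the values $\{f(x)\}_{x\in S_\infty}$ are deterministic and equal to the true ones.

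The crux is to establish uniqueness of the support of $\mathbf{P}_\infty$. Suppose $X_1, X_2$ both satisfy $\mathbf{P}_\infty(\cO_\cA(f) = X_i) > 0$. By the definition of $S_\infty$ we have $X_1, X_2 \subseteq S_\infty$, and by the second fact $\mathbf{P}_\infty$-almost every function agrees with the pinned values on $S_\infty$; hence I can pick representatives $f_1, f_2$ in the support with $\cO_\cA(f_i) = X_i$ and $f_1 = f_2$ on $S_\infty$. Taking $S = S_\infty \supseteq X_1 \cup X_2$ and invoking complement-independence (Definition~\ref{def:local}) yields $X_1 = \cO_\cA(f_1) = \cO_\cA(f_2) = X_2$. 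Thus $\mathbf{P}_\infty$ charges a single set $X^\ast$, so $\mathbf{P}_\infty(\cO_\cA(f) = X^\ast) = 1$; combined with the first fact, $X^\ast = \cO_\cA(f)$ almost surely. Therefore $Z^X_\infty = \mathbf{1}\{\cO_\cA(f) = X\}$, and passing through the martingale limit gives the claim.

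I expect the main obstacle to be making the ``representative function'' step rigorous: the measure-theoretic translation from the statements ``$X_i$ has positive limiting posterior probability'' and ``$\sigma_\infty$ vanishes on $S_\infty$'' into the existence of honest functions $f_1, f_2$ in the support of the conditional law that coincide on $S_\infty$, so that the \emph{deterministic} hypothesis of complement-independence genuinely applies. Care is also needed to discard the finitely many exceptional null sets uniformly over all $X \subset \mathbb{X}$ (which is precisely where finiteness of $\mathbb{X}$ enters) and to confirm that the true target set indeed lands inside $S_\infty$.
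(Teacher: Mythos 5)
Your proof is correct and follows essentially the same route as the paper's: L\'evy/martingale convergence to $\mathbf{P}_\infty$, the almost-sure containment $\cO_\cA(f)\subseteq S_\infty$, and complement-independence applied with $S=S_\infty$ together with Lemma~\ref{lemma1}'s $\cF_\infty$-measurability of $f$ on $S_\infty$. Your ``uniqueness of the support'' framing and the explicit computation $\mathbf{E}[\mathbf{1}\{Z^X_\infty=0\}\,Z^X_\infty]=0$ are just slightly more detailed renderings of the paper's appeal to the definition of $S_\infty$ and the law of iterated expectation.
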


\begin{proof} Recall that $\lim_{n \rightarrow \infty} \prob_n(\cO_\cA(f) = X) = \prob_\infty(\cO_\cA(f) = X)$ almost surely. Thus, it remains to show that $\prob_\infty(\cO_\cA(f) = X) = \mathbf{1}\{\cO_\cA(f) = X\}$ almost surely.

Let $S_\infty$ be defined as in Lemma~\ref{lemma1}. Since $\mathbb{X}$ is finite, by the definition of $S_\infty$, it follows that $\prob_\infty \left(\cO_\cA(f) \subset S_\infty \right) = 1$. Moreover, by the law of iterated expectation, it also holds that $\prob \left(\cO_\cA(f) \subset S_\infty \right) = 1$.

Since $\cO_\cA(f)$ is complement-independent, it is fully determined by the values of $f$ over $S_\infty$ whenever $\cO_\cA(f) \subset S_\infty$. By Lemma~\ref{lemma1}, we know that both $S_\infty$ and the image of $f$ over this set are $\cF_\infty$-measurable. Therefore, for any fixed $X \subset \mathbb{X}$, we have $\{\cO_\cA(f) = X\} \cap \{\cO_\cA(f) \subset S_\infty\} \in \cF_\infty$, and hence, $\prob_\infty(\cO_\cA(f)=X, \cO_\cA(f) \subset S_\infty) = \mathbf{1}\{\cO_\cA(f) = X\}\mathbf{1}\{\cO_\cA(f) \subset S_\infty\}$.

Finally, since $\prob \left(\cO_\cA(f) \subset S_\infty \right) = \prob_\infty \left(\cO_\cA(f) \subset S_\infty \right) = 1$, we conclude that $\prob_\infty(\cO_\cA(f) = X) = \mathbf{1}\{\cO_\cA(f) = X\}$ almost surely. \end{proof}

\subsection{Proof of Theorem 2}
\begin{theorem} There exists a problem instance (i.e., $\mathbb{X}$, a Bayesian prior over $f$, and $\cA$) such that if the sequence of points $\{x_n\}_{n=1}^\infty$ is chosen according to the PS-BAX algorithm, then there is a set $X \subset \mathbb{X}$ such that $\lim_{n \rightarrow \infty} \mathbf{P}_n(\cO_\cA(f) = X) = 1/2$ almost surely for $f$ drawn from the prior. \end{theorem}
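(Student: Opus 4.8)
The plan is to exhibit a finite instance in which membership in the target set is decided entirely by the value of $f$ at a single point that PS-BAX provably never evaluates; consequently the posterior at that point never moves off the prior, and the posterior probability of the target set is frozen at $1/2$. Concretely, take $\mathbb{X} = \{x_1, x_2, x_3\}$ with a GP prior whose covariance is the identity, so that $f(x_1), f(x_2), f(x_3)$ are i.i.d.\ $\mathcal{N}(0,1)$ under the prior (the observation noise $\sigma^2 \ge 0$ is irrelevant). Define the base algorithm $\cA$ so that $\cO_\cA(f) = \{x_1\}$ when $f(x_3) \ge 0$ and $\cO_\cA(f) = \{x_2\}$ when $f(x_3) < 0$. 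Note that this target set is \emph{not} complement-independent, since it is determined by $f(x_3)$, a point lying in neither candidate output set; this is exactly why Theorem~\ref{thm:consist} does not apply and why we should expect failure.

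\textbf{Key steps.} First I would observe that, in every realization, the sampled set $X_n = \cO_\cA(\tilde f_n)$ equals either $\{x_1\}$ or $\{x_2\}$ (the tie $\tilde f_n(x_3)=0$ has probability zero under the continuous posterior of $f(x_3)$), so line~4 of Algorithm~\ref{alg:ps} always selects $x_n \in \{x_1, x_2\}$. Hence $x_3$ is never evaluated: for all $n$, the dataset $\cD_{n-1}$ contains observations only at $x_1$ and $x_2$. Second, since the prior makes $f(x_3)$ independent of $f(x_1)$ and $f(x_2)$, conditioning on data located only at $x_1, x_2$ leaves the law of $f(x_3)$ unchanged; thus the posterior of $f(x_3)$ given $\cD_{n-1}$ is $\mathcal{N}(0,1)$ for every $n$. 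Third, because $\{\cO_\cA(f) = \{x_1\}\} = \{f(x_3) \ge 0\}$, we obtain $\mathbf{P}_n(\cO_\cA(f) = \{x_1\}) = \mathbf{P}_n(f(x_3) \ge 0) = 1/2$ identically in $n$. Taking $X = \{x_1\}$, the limit is $1/2$, and since the value is deterministically $1/2$ along every trajectory of the algorithm, it holds almost surely for $f$ drawn from the prior.

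\textbf{Main obstacle.} The content lies in choosing the construction rather than in any delicate estimate: once the instance is fixed, each step is immediate. The single point requiring care is justifying that $x_3$ never enters a sampled target set and is therefore never queried—this is precisely what severs the feedback loop that drives posterior concentration in Theorem~\ref{thm:consist}, since PS-BAX only ever acquires information at points that appear in some sampled target set. I would also emphasize that the mechanism is robust: it relies only on $\cA$ having an output that depends on $f$ at a point outside all of its possible outputs, whose prior is symmetric about the decision threshold and independent of the rest, so that no information about that decisive point is ever acquired and the ambiguity persists forever.
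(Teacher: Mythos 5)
Your construction is correct and uses essentially the same mechanism as the paper's proof: the target set is decided by $f$ at a point that PS-BAX never queries, so the relevant posterior stays frozen at the prior and the probability is identically $1/2$. The only cosmetic difference is that the paper makes the queried points deterministic under the prior (so no information is ever gained at all), whereas you make the decisive point independent of the queried ones (so the information gained is irrelevant); both choices sever the same feedback loop.
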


\begin{proof} Let $\mathbb{X} = \{-1, 0, 1\}$, and consider a GP prior over $f$ such that $f(-1) = f(1) = 0$ and $f(0)$ is a standard normal random variable. Define the algorithm $\cA$ such that $\cO_\cA(f) = \{-1\}$ if $f(0) < 0$ and $\cO_\cA(f) = \{1\}$ otherwise. Clearly, the target set defined by $\cA$ is not complement-independent. Moreover, under the PS-BAX algorithm, $x_n$ is always either $-1$ or $1$. Since the values of $f$ at these points are known, the posterior distribution over $f$ at any iteration $n$ remains equal to the prior. Therefore, it follows that $\prob_n(\cO_\cA(f) = \{-1\}) = \prob_n(\cO_\cA(f) = \{1\}) = 1/2$ for all $n$. \end{proof}

\section{Implementation Details}
\label{app:implement}
All our algorithms are implemented using BoTorch \citep{balandat2020botorch}. Specifically, we use BoTorch’s \texttt{SingleTaskGP} class with its default settings for all our GP models, except in the top-$k$ GB1 problem, where we employ a deep kernel GP. To fit our GP models, we maximize the marginal log-likelihood. Approximate samples from the posterior on $f$ for both PS-BAX and INFO-BAX are generated using 1000 random Fourier features \citep{rahimi2007random}. Our implementations of PS-BAX and INFO-BAX support automatic gradient computation, enabling continuous optimization when $\mathbb{X}$ is continuous. For INFO-BAX, we use 30 Monte Carlo samples ($L = 30$) to estimate the EIG across all problems.

\end{document}